\newtheorem{theorem}{Theorem}
\newtheorem{lemma}{Lemma}
\newtheorem{proposition}{Proposition}
\DeclareMathOperator*{\argmax}{arg\,max}
\DeclareMathOperator*{\argmin}{arg\,min}
\begin{document}

\title{Sharp Convergence Rates for Matching Pursuit
}






\author[1]{Jason M. Klusowski\thanks{jason.klusowski@princeton.edu}}
\author[2]{Jonathan W. Siegel\thanks{jwsiegel@tamu.edu}}
\affil[1]{\small Department of Operations Research and Financial Engineering, Princeton University}
\affil[2]{\small Department of Mathematics,
 Texas A\&M University, College Station}

 \date{}





\maketitle

\begin{abstract}
We study the fundamental limits of matching pursuit, or the pure greedy algorithm, for approximating a target function $ f $ by a linear combination $f_n$ of $n$ elements from a dictionary. When the target function is contained in the variation space corresponding to the dictionary, many impressive works over the past few decades have obtained upper and lower bounds on the error $\|f-f_n\|$ of matching pursuit, but they do not match. The main contribution of this paper is to close this gap and obtain a sharp characterization of the decay rate, $n^{-\alpha}$, of matching pursuit. Specifically, we construct a worst case dictionary which shows that the existing best upper bound cannot be significantly improved. It turns out that, unlike other greedy algorithm variants which converge at the optimal rate $ n^{-1/2}$, the convergence rate $n^{-\alpha}$ is suboptimal. Here, $\alpha \approx 0.182$ is determined by the solution to a certain non-linear equation.
\end{abstract}

\section{Introduction}

Matching pursuit \cite{mallat1993matching} is a widely used algorithm in signal processing that approximates a target signal by selecting a sparse linear combination of elements from a given dictionary.

Over the years, matching pursuit has garnered significant attention due to its effectiveness in capturing essential features of a signal with a parsimonious representation, offering reduced storage requirements, efficient signal reconstruction, and enhanced interpretability of the underlying signal structure. Because of this, its applications span various domains, including image, video, and audio processing and compression \cite{bergeaud1995matching, neff1997very}.


While previous works have explored the convergence properties of matching pursuit, several open questions and challenges remain. In particular, the relationship between the characteristics of the target signal, the chosen dictionary, and the convergence rate warrants further investigation. 
The main objective of this paper is to provide a comprehensive analysis of the convergence properties of matching pursuit. Understanding the convergence rate is crucial for assessing the algorithm's efficiency and determining the number of iterations required to achieve a desired level of approximation accuracy.

Let $H$ be a Hilbert space and $\mathbb{D}\subset H$ be a symmetric collection of unit vectors, i.e., $\|d\| = 1$ for $d\in \mathbb{D}$ and $d\in \mathbb{D}$ implies $-d\in \mathbb{D}$, called a dictionary. Non-linear dictionary approximation methods, which attempt to approximate a target function $f$ by a sparse linear combination
\begin{equation}\label{dictionary-expansion}
    f \approx \sum_{n=1}^s a_nd_n,
\end{equation}
where both the sequence of dictionary elements $d_i\in \mathbb{D}$ and the coefficients $a_i$ depend upon the function $f$ to be approximated, are common method in machine learning and signal processing. Such methods aim to generate an approximation of the form \eqref{dictionary-expansion} with a small number of terms $s$, and include gradient boosting \cite{friedman2001greedy}, $L^2$-boosting \cite{buhlmann2003boosting}, basis pursuit \cite{chen2001atomic}, and matching pursuit \cite{mallat1993matching}. 

In this work, we consider matching pursuit \cite{mallat1993matching}, which is a classical method for algorithmically generating a convergent non-linear dictionary expansion of the form 
\begin{equation}
    f = \sum_{n=1}^\infty a_nd_n.
\end{equation}
Matching pursuit is also known as the pure greedy algorithm \cite{devore1996some}, and is given by
\begin{equation}\label{pure-greedy-algorithm}
    f_0 = 0,~d_n \in \argmax_{d\in \mathbb{D}} \langle r_{n-1}, d\rangle,~f_n = f_{n-1} + \langle r_{n-1},d_n\rangle d_n,
\end{equation}
where $r_n = f - f_n$ is the residual at step $n$. We remark that the inner product $\langle\cdot,\cdot\rangle$ here is the inner product of the Hilbert space $H$, which is typically the $L_2$ inner product with respect to a probability distribution $\mu$ in practical applications. An equivalent way of writing this method, which explains the name pure greedy algorithm, is
\begin{equation}\label{pure-greedy-equivalent}
    (a_n,d_n) = \argmin_{a\in \mathbb{R}, d\in \mathbb{D}} \left\|f - \left(\sum_{i=1}^{n-1}a_id_i + ad\right)\right\| = \argmin_{a\in \mathbb{R}, d\in \mathbb{D}} \left\|r_{n-1} - ad\right\|.
\end{equation}
In each step we add the single term which minimizes the error the most, hence the name pure greedy algorithm. In other words, we fit a single term to the residual in each step.

An important generalization of this algorithm is the pure greedy algorithm with shrinkage $0 < s \leq 1$, see \cite[p. 375]{temlyakov2011greedy}, given by
\begin{equation}\label{pure-greedy-algorithm-shrinkage}
    f_0 = 0,~d_n \in \argmax_{d\in \mathbb{D}} \langle r_{n-1}, d\rangle,~f_n = f_{n-1} + s\langle r_{n-1},d_n\rangle d_n.
\end{equation}
Here we scale down the greedy term by a factor $s$, called the shrinkage factor, in each step.

Finding the optimal term $d_n$ in \eqref{pure-greedy-algorithm-shrinkage} could potentially involve solving a high-dimensional, non-convex optimization problem. To address this computational issue, gradient tree boosting \cite{friedman2001greedy} (i.e., $L^2$ boosting) features an additional step in which $\langle r_{n-1}, d\rangle$ is itself greedily optimized. Here the dictionary consists of normalized piecewise constant functions (or decision trees), which are fit to the residuals $r_{n-1}$ via the \emph{CART algorithm} \cite{breiman1984cart}. While CART is usually not motivated as a greedy way to optimize the inner product $\langle r_{n-1}, d\rangle$ over a collection of normalized piecewise constant functions, it can be equivalently formulated as such. Except for decision stumps (or depth-one trees) which involve only a 2-dimensional optimization, understanding the optimization gap for this greedy heuristic (CART) in general is currently an open problem, one that we do not address in the present paper. That is, we assume herein that the optimization problem is solved \emph{exactly} in \eqref{pure-greedy-algorithm-shrinkage}.

Despite the practical success and great interest generated by the method of matching pursuit, the precise convergence properties of the algorithm \eqref{pure-greedy-algorithm} for general dictionaries $\mathbb{D}$ have not been precisely determined. To describe this problem, we introduce the variation norm \cite{kurkova2001bounds,devore1998nonlinear} with respect the the dictionary $\mathbb{D}$, defined by
\begin{equation}
    \|f\|_{\mathcal{K}_1(\mathbb{D})} = \inf\{t > 0:~f\in tB_1(\mathbb{D})\},
\end{equation}
where the set $B_1(\mathbb{D})$ is the closed convex hull of $\mathbb{D}$, i.e.,
\begin{equation}
    B_1(\mathbb{D}) = \overline{\left\{\sum_{i=1}^N a_id_i,~a_i \geq 0,~\sum_{i=1}^N a_i = 1,~d_i\in \mathbb{D}\right\}}.
\end{equation}
This norm is a common measure of complexity when studying non-linear dictionary approximation, since it is well-defined and useful for any abstract dictionary $\mathbb{D}\subset H$ \cite{devore1998nonlinear,temlyakov2011greedy}. Results for the variation space $\mathcal{K}_1(\mathbb{D})$ can also often be extended to interpolation spaces which allow an even wider class of target functions to be analyzed \cite{barron2008approximation}. 

For some context, let us give a few examples of dictionaries $\mathbb{D}$ of interest and their corresponding variation spaces. For specific dictionaries of interest, the variation spaces have often been characterized and other descriptions are available. For example, the variation space $\mathcal{K}_1(\mathbb{D})$ for the dictionary
\begin{equation}
    \mathbb{D} = \{e^{(x-c)^2/(2\sigma^2)},~c\in \mathbb{R},~\sigma > 0\}
\end{equation}
of Gaussian bumps is equivalent to the Besov space the Besov space $B^1_{1,1}(\mathbb{R})$, which consists of function $f:\mathbb{R}\rightarrow \mathbb{R}$ satisfying
\begin{equation}
    \int_0^\infty t^{-1}\omega_2(f,t)_1\frac{dt}{t} < \infty,
\end{equation}
where $\omega_2(f,t)_1$ is the second order modulus of smoothness in $L_1$ given by
$$
    \omega_2(f,t)_1 = \sup_{0 < h < t}\int_\mathbb{R} |f(x+h) - 2f(x) + f(x-h)|dx.
$$
This equivalence was proved in \cite{meyer1992wavelets}. For more information on Besov spaces, we refer to \cite{devore1993constructive,devore1993besov}. 

Another example are the spaces $\mathcal{K}_1(\mathbb{D})$ for the dictionaries of ridge functions
\begin{equation}
    \mathbb{D} = \{\sigma(\omega\cdot x + b),~\omega\in S^{d-1},~b\in \mathbb{R}\},
\end{equation}
which correspond to shallow neural networks have been characterized and intensively studied \cite{ongie2019function,parhi2021banach,parhi2022kinds,siegel2022sharp,siegel2021characterization,klusowski2018approximation,ma2022barron}. Here $\sigma = \max(0,x)^k$ is the popular ReLU$^k$ activation function.

Typically the convergence of abstract greedy algorithms is studied on the variation space $\mathcal{K}_1(\mathbb{D})$ \cite{temlyakov2011greedy}. For instance, it is shown in \cite{devore1996some} that the pure greedy algorithm \eqref{pure-greedy-algorithm} satisfies
\begin{equation}
    \|f - f_n\| \leq \|f\|_{\mathcal{K}_1(\mathbb{D})}n^{-1/6}.
\end{equation}
This was subsequently improved by Konyagin and Temylakov \cite{konyagin1999rate} to
\begin{equation}
    \|f - f_n\| \leq C\|f\|_{\mathcal{K}_1(\mathbb{D})}n^{-11/62},
\end{equation}
and finally by Sil'nichenko \cite{sil2004rate} to
\begin{equation}\label{sil-estimate}
    \|f - f_n\| \leq C\|f\|_{\mathcal{K}_1(\mathbb{D})}n^{-\alpha},
\end{equation}
where $\alpha = \gamma/(2(2+\gamma)) \approx 0.182$ and $\gamma > 1$ is a root of the non-linear equation
\begin{equation}\label{gamma-equation}
    (1+\gamma)^{\frac{1}{2+\gamma}}\left(1+\frac{1}{1+\gamma}\right) - 1 - \frac{1}{\gamma} = 0.
\end{equation}
For the pure greedy algorithm with shrinkage \eqref{pure-greedy-algorithm-shrinkage} the method of Sil'nichenko implies that
\begin{equation}\label{sil-estimate-s}
    \|f - f_n\| \leq C\|f\|_{\mathcal{K}_1(\mathbb{D})}n^{-\alpha},
\end{equation}
where $\alpha = \gamma/(2(2+\gamma))$ and $\gamma > 1$ is a root of the non-linear equation
\begin{equation}\label{gamma-equation-s}
    (1+\gamma)^{\frac{1}{2+\gamma}}\left(1+\frac{1}{1+\gamma}\right) - 1 - \frac{2-s}{\gamma} = 0.
\end{equation}
As $s\rightarrow 0$, the exponent $\alpha \approx 0.305$ (c.f., the exponent 
$ 2/7 \approx 0.285 $ obtained in \cite{nelson2013greedy}). We remark that for $s = 0$, the pure greedy algorithm is stationary. This manifests itself in the fact that although the exponent $\alpha$ approaches $0.305$, the constant $C$ in the upper bound approaches $\infty$ as $s\rightarrow 0$. 

On the other hand, for functions $f\in \mathcal{K}_1(\mathbb{D})$ it is known \cite{pisier1981remarques,jones1992simple} that for each $n$ there exists an approximation
\begin{equation}
    f_n = \sum_{i=1}^n a_id_i,
\end{equation}
such that $\|f - f_n\| \leq \|f\|_{\mathcal{K}_1(\mathbb{D})}n^{-1/2}$, and that the exponent $n^{-1/2}$ is optimal for all dictionaries and $f\in \mathcal{K}_1(\mathbb{D})$ \cite{kurkova2001bounds}. 

Thus, in general the best we could hope for in the convergence of matching pursuit is a convergence rate like $n^{-1/2}$, which is attained by other greedy algorithms such as the orthogonal or relaxed greedy algorithm \cite{devore1996some,jones1992simple} (in fact, the orthogonal greedy algorithm has been shown to converge even faster for compact dictionaries \cite{siegel2022optimal,li2023entropy}). Remarkably, the convergence of matching pursuit is strictly worse. Specifically, it was shown in \cite{livshitz2003two} that there exists a dictionary $\mathbb{D}\subset H$ and an $f\in \mathcal{K}_1(\mathbb{D})$ such that the iterates of the pure greedy algorithm \eqref{pure-greedy-algorithm} satisfy
\begin{equation}
    \|f - f_n\| \geq C\|f\|_{\mathcal{K}_1(\mathbb{D})}n^{-0.27}.
\end{equation}
This estimate was finally improved in \cite{livshits2009lower} to
\begin{equation}\label{liv-lower-bound}
    \|f - f_n\| \geq C\|f\|_{\mathcal{K}_1(\mathbb{D})}n^{-0.1898}.
\end{equation}

Comparing the estimates \eqref{sil-estimate} and \eqref{liv-lower-bound} we see that there is a still a significant, though small, gap between the best upper and lower bounds. The goal of this work is to close this gap nearly completely. We show the following.
\begin{theorem}\label{main-theorem}
    Let $\gamma > 1$ be the root of the equation \eqref{gamma-equation}. Then for every $\alpha > \gamma/(2(2+\gamma))$, there exists a dictionary $\mathbb{D}\subset H$ and a function $f\in \mathcal{K}_1(\mathbb{D})$ such that the iterates of the pure greedy algorithm (i.e., matching pursuit) \eqref{pure-greedy-algorithm} satisfy
    \begin{equation}
        \|f - f_n\| \geq C\|f\|_{\mathcal{K}_1(\mathbb{D})}n^{-\alpha}.
    \end{equation}
\end{theorem}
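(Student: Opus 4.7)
The plan is to construct, for each $\alpha > \gamma/(2(2+\gamma))$, an explicit dictionary $\mathbb{D}$ and target $f \in \mathcal{K}_1(\mathbb{D})$ that forces the pure greedy iterates to decay no faster than $n^{-\alpha}$. My starting point is to reverse-engineer Sil'nichenko's upper bound proof in order to identify exactly what a worst-case instance must look like. That argument couples the two sequences $a_n := \|r_n\|^2$ and $V_n := \|f\|_{\mathcal{K}_1} + \|f_n\|_{\mathcal{K}_1}$ via the recursions $a_n = a_{n-1} - c_n^2$ and $V_n \leq V_{n-1} + |c_n|$, together with the standard inequality $|c_n| \geq a_{n-1}/V_{n-1}$ where $c_n = \langle r_{n-1},d_n\rangle$. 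The critical exponent arises as the unique self-similar profile $a_n \sim n^{-2\alpha}$, $V_n \sim n^{1-2\alpha}$ that saturates every inequality simultaneously; matching the asymptotics reproduces \eqref{gamma-equation}. A tight example must therefore (i) choose $|c_n|$ equal to the minimal greedy value at every step and (ii) grow $V_n$ as fast as is consistent with $f$ having bounded variation norm.

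With this extremal trajectory $(a_n^*, V_n^*, c_n^*)$ in hand, I would assemble the dictionary blockwise: split $H = \bigoplus_k H_k$ orthogonally, and in block $H_k$ place $N_k$ unit vectors whose Gram matrix is finely tuned, in the spirit of the construction of \cite{livshits2009lower}, so that once the algorithm enters block $k$ it must select the elements in a prescribed cyclic order, with successive inner products matching the extremal values $c_j^*$ for $N_k$ steps. The target is $f = \sum_k \beta_k \varphi_k$ where $\varphi_k \in H_k$ is a specific unit combination of elements of $\mathbb{D}_k$ and the amplitudes $\beta_k$ decrease geometrically, chosen so that block $k$ is fully processed after $N_k$ steps. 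Orthogonality across blocks ensures that $\|r_n\|^2 = \sum_{j \geq k(n)} (\text{residual in block } j)^2$, so that after $n = \sum_{j \leq k} N_j$ steps the residual is comparable to $n^{-2\alpha}$ once $N_k$ and $\beta_k$ are calibrated. Finiteness of $\|f\|_{\mathcal{K}_1(\mathbb{D})}$ follows from an explicit convex representation using summability of the $\beta_k$.

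The hard part will be verifying that the global greedy algorithm genuinely follows the prescribed trajectory, which breaks into two subproblems. First, within a block, every off-trajectory element must have strictly smaller inner product with the current residual than the intended one; this requires a quantitative margin in the Gram matrix and is where the Livshits-type combinatorics concentrate. Second, across blocks, the algorithm must not be able to select an element of a later block $k' > k$ while block $k$ is still being processed, which controls how rapidly the $\beta_k$ may decay. These two constraints pull in opposite directions: a sharper within-block calibration demands a larger cross-block margin, and vice versa. Balancing them leaves a small slack that prevents saturation at $\alpha = \gamma/(2(2+\gamma))$ and forces the $\alpha > \gamma/(2(2+\gamma))$ qualifier in the statement. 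A secondary technical issue is the dependence of the greedy choice on ties and on the order of \emph{argmax}; the construction must be robust under either perturbing the inner products infinitesimally or fixing a tie-breaking rule.
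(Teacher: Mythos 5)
Your high-level framing of the problem is sound: you correctly identify that the extremal trajectory must saturate Sil'nichenko's two-sequence recursion, and that a worst-case instance must make the greedy inner product as small as the variation-norm inequality allows at every step. However, your proposed realization is architecturally quite different from what the paper (and, in fact, Livshits \cite{livshits2009lower}) does, and the part you gloss over is precisely where the paper's contribution lies.

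First, the architecture. You propose a block-orthogonal decomposition $H = \bigoplus_k H_k$ with the greedy algorithm moving through blocks in order and the target $f = \sum_k \beta_k \varphi_k$ spread across all blocks. The paper instead builds a \emph{single} infinite coupled sequence in $\ell^2$: the residuals $r_n$ and dictionary elements $d_n = \gamma_n r_{n-1} + h_n + \xi_n e_n$ are defined recursively with every $d_n$ overlapping all earlier coordinates, and the target $f = r_N$ lies in the span of just two dictionary vectors $\tilde{d}_N$ and $d_N$, which makes $\|f\|_{\mathcal{K}_1(\mathbb{D})} < \infty$ immediate. Your block construction instead relies on summability of the $\beta_k$, and it introduces a genuine cross-block leakage problem that simply does not arise in the paper: inside a finite block, matching pursuit reduces the residual only geometrically (and eventually faster), so the per-step ratio and the block size must be tuned jointly against the decay of $\beta_k$ to keep the algorithm from jumping blocks. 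You acknowledge this tension, but it is a self-inflicted difficulty: the paper's single-sequence construction has a built-in mechanism (the new coordinate $\xi_n e_n$ each step) that makes the per-step ratio $\|r_n\|/\|r_{n-1}\| \to 1$ polynomially, without any block boundaries to police.

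Second, and more importantly, the proposal does not contain the idea that actually closes the gap. You write that the sharp exponent ``arises as the unique self-similar profile that saturates every inequality simultaneously,'' but you give no mechanism for realizing a dictionary whose greedy trajectory actually achieves that profile. The paper's key step is to parametrize the perturbation $h_n$ by a free smooth function $\phi : [0,1] \to \mathbb{R}_{\geq 0}$ via $h_n = \sum_i \frac{\alpha_n}{n}\phi(i/n)e_i$, derive asymptotic formulas (Lemma \ref{phi-integral-estimates}, Proposition \ref{bar-alpha-lemma}) that convert the Gram-matrix inequalities into three explicit integral conditions on $\phi$ — one equality \eqref{phi-integral-condition-1} and two strict inequalities \eqref{integral-intequality-1}, \eqref{integral-intequality-2} — and then prove (Proposition \ref{second-part-main-proposition}, via a reformulation in terms of $F(a)$ and a Schauder fixed-point argument in Lemma \ref{schauder-lemma}) that such a $\phi$ exists exactly when $\beta$ satisfies \eqref{beta-condition}, which after a change of variables is equivalent to $\alpha > \gamma/(2(2+\gamma))$. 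Livshits' earlier construction is recovered as the special case $\phi = c\chi_{[\tau,1]}$; the gain in the paper comes entirely from optimizing over this infinite-dimensional family. Your proposal has no analogue of this optimization, so even if the block construction could be made to work it would at best reproduce the Livshits exponent, not the Sil'nichenko exponent. Relatedly, your explanation of the strict inequality $\alpha > \gamma/(2(2+\gamma))$ as slack from balancing within-block and cross-block margins is not what happens in the paper: there the strictness comes from requiring strict inequality in the integral conditions \eqref{integral-intequality-1}, \eqref{integral-intequality-2} so that the Schauder argument and the uniform-in-$n$ error estimates go through.
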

Combined with the upper bound \eqref{sil-estimate}, this gives a precise characterization of the exponent in the convergence rate of matching pursuit 
and therefore makes significant progress towards solving an open problem posed in \cite{temlyakov2011greedy}, that is, to find the order of decay of the pure greedy algorithm. Additionally, this shows that the rate of convergence with shrinkage $s < 1$ is strictly better than for $s=1$, i.e., that \textit{any} amount of shrinkage improves the algorithm in the worst case, lending theoretical support to the empirical observation that some shrinkage in gradient tree boosting can result in significant performance improvements.

Finally, we remark that although the pure greedy algorithm is significantly worse than the relaxed or orthogonal greedy algorithm in the worst case over the variation space $\mathcal{K}_1(\mathbb{D})$, it achieves the same rate of convergence on some interpolation spaces between the variation space and the space $H$, see \cite{livshits2004rate} for details.

Although we do not give all of the details, our method shows that the rate obtained by Sil'nichenko \eqref{sil-estimate-s} is optimal also for shrinkage $s_0 < s < 1$ for a numerically computable value $s_0$. However, for sufficiently small shrinkage $s$ our method breaks down and it is an open problem to determine the rate of convergence as $s\rightarrow 0$.

\section{Construction of the Worst-Case Dictionary}
In this section, we give the proof of Theorem \ref{main-theorem}. This is based upon an extension and optimization of the constructions developed in \cite{livshitz2003two,livshits2009lower}.

 \subsection{Basic Construction}\label{basic-construction}
 We let $H = \ell^2$ and denote by $\{e_n\}_{n=1}^\infty$ its standard basis. Let $0 < \beta < \frac{1}{2}$. We will attempt to construct a realization of the pure greedy algorithm \eqref{pure-greedy-algorithm}, i.e., a dictionary $\mathbb{D}\subset H$ and an initial iterate $f\in \mathcal{K}_1(\mathbb{D})$, for which the convergence rate is bounded below by
 $$\|f - f_n\| \geq Cn^{-\frac{1}{2} + \beta}.$$ We will show that this construction succeeds as long as
 \begin{equation}\label{beta-condition}
  \left(\frac{\beta}{1-\beta}\right)^\beta\left(\frac{(1-\beta)^2}{1-2\beta}\right) < 1.
 \end{equation}
 Let us begin by verifying that this implies Theorem \ref{main-theorem}. It suffices to show that with equality in \eqref{beta-condition}, we have 
 \begin{equation}\label{eq-192}
 \alpha = \frac{1}{2} - \beta = \frac{\gamma}{(2(2+\gamma))},
 \end{equation}
 where $\gamma$ is the root of \eqref{gamma-equation}. Solving the relation \eqref{eq-192} for $\gamma$ in terms of $\beta$ we get
 \begin{equation}
     \gamma = \frac{1-2\beta}{\beta}.
 \end{equation}
 Plugging this into \eqref{gamma-equation}, we readily see that \eqref{gamma-equation} is equivalent to equality in \eqref{beta-condition}.
 
 Let us proceed with the lower bound construction. It will be convenient in what follows to work with the residual form of the pure greedy algorithm, given by
 \begin{equation}\label{residual-pure-greedy}
     r_0 = f,~d_n \in \argmin_{d\in \mathbb{D}} \langle r_{n-1},d\rangle,~r_n = r_{n-1} - \langle r_{n-1},d_n\rangle d_n.
 \end{equation}
 
 The construction, which is quite technical and follows the methods developed in \cite{livshitz2003two,livshits2009lower}, will depend upon four fundamental parameters in addition to $\beta$: an integer $K > 1$, an integer $N > 1$, a real number $\epsilon > 0$, and a function $\phi\in C^\infty([0,1])$.
 
Define two sequences of elements $\{r_n\}_{n=K-1}^\infty\subset H$ and $\{d_n\}_{n=K}^\infty \subset H$ inductively by
 \begin{equation}\label{inductive-definition-x}
  \begin{split}
   r_{K-1}&=-K^{-1/2+\beta}\frac{1}{\sqrt{K-1}}\sum_{i=1}^{K-1} e_i,\\
   d_n&=\gamma_nr_{n-1} + h_n + \xi_n e_n,\\
   r_n&=r_{n-1} - q_nd_n,
  \end{split}
 \end{equation}
 where $h_n\in H$ is defined by (recall that $\phi$ is one of our parameters)
\begin{equation}\label{definition-of-h}
 h_n = \sum_{i=1}^{n-1}\frac{\alpha_n}{n}\phi\left(\frac{i}{n}\right)e_i.
\end{equation}
Here $\gamma_n,\xi_n,\alpha_n$ and $q_n$ are numbers which are chosen so that the following three conditions are satisfied:
\begin{equation}\label{conditions-1}
 \|r_n\| = (n+1)^{-\frac{1}{2} + \beta},~q_n = \langle r_{n-1}, d_n\rangle,~\text{and}~\|d_n\| = 1.
\end{equation}
Then, we set (recall that $N$ and $\epsilon$ are parameters)
\begin{equation}\label{construction}
 \begin{split}
  f &= r_{N},\\
  \tilde{d}_N &= \epsilon \frac{r_{N}}{\|r_{N}\|} + \sqrt{1-\epsilon^2}d_{N},\\
  \mathbb{D} &= \{-\tilde{d}_N, \tilde{d}_N\}\cup\{-d_n, d_n\}_{n\geq N}.
 \end{split}
\end{equation}
 
From \eqref{inductive-definition-x} and \eqref{conditions-1}, we see that $\langle r_N,d_N\rangle = \langle r_{N-1} - q_Nd_N,d_N\rangle = 0$.
This implies that $\|\tilde{d}_N\| = 1$.
It is also clear that $f\in \text{span}(\tilde{d}_N,d_N)$, so that $\|f\|_{\mathcal{K}_1(\mathbb{D})} < \infty$. 

The idea is that the iteration in \eqref{inductive-definition-x} should follow the execution of the pure greedy algorithm applied to the dictionary $\mathbb{D}$ with initial residual $f$ (note that for notational convenience we begin indexing at $N$ instead of at $0$). In order to do this, we need to show that our parameters can be chosen appropriately so that the conditions in \eqref{conditions-1} can always be satisfied at every iteration, and so that the inner product inequalities
\begin{equation}\label{inner-prod-condition}
 |\langle r_{n-1}, \tilde{d}_N\rangle| < q_n~\text{and}~ |\langle r_{n-1}, d_k\rangle| < q_n
\end{equation}
hold for every $n > N$ and $k \geq N$ with $k\neq n$. This is divided into the following two technical results.
\begin{proposition}\label{first-part-main-proposition}
    Suppose that the smooth function $\phi:[0,1]\rightarrow \mathbb{R}_{\geq 0}$ satisfies the following conditions:
\begin{itemize}
    \item For some $0 < \delta < 1$, we have $\phi(x) = 0$ for $x\in [0,\delta]$.
    \item $\phi$ satisfies the following integral equality:
    \begin{equation}\label{phi-integral-condition-1}
  \left(\int_0^1 \phi(x)\left(1+\int_x^1 \phi(z)\frac{dz}{z}\right)dx\right) = \frac{\beta}{1 - 2\beta}.
 \end{equation}
 \item $\phi$ satisfies the following two integral inequalities:
 \begin{equation}\label{integral-intequality-1}
  \sup_{a\in[0,1]}\left|\int_0^a(\phi'(x)x - (\beta - 1)\phi(x)) \left(1 + \int_{a^{-1}x}^1\phi(z)\frac{dz}{z}\right)dx\right| < 1.
 \end{equation}
 \begin{equation}\label{integral-intequality-2}
  \sup_{a\in[0,1]}\left|\int_0^1\left((\beta - 1) + (\beta - 1)\int_{x}^1\phi(z)\frac{dz}{z} + \phi(x)\right)\left(\int_{ax}^x\phi(z)\frac{dz}{z}\right) dx + \int_a^1\phi(x)\frac{dx}{x}\right| < 1.
 \end{equation}
\end{itemize}
Then for sufficiently large $N,M > 1$ and sufficiently small $\epsilon > 0$, the conditions \eqref{conditions-1} can always be satisfied for every $n\geq K$ and the conditions \eqref{inner-prod-condition} can be satisfied for every $n > N$ and $k \geq N$ with $k\neq n$ (by making appropriate choices of $\gamma_n,\xi_n,\alpha_n$ and $q_n$). Thus, the iteration \eqref{inductive-definition-x} defines a realization of the pure greedy algorithm with dictionary $\mathbb{D}$ and initial residual $f$.
\end{proposition}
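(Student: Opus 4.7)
The plan is to carry out the inductive construction and verify, by a continuum-limit analysis as $n \to \infty$, that the four parameters $\gamma_n, \xi_n, \alpha_n, q_n$ can be selected at each step to satisfy both the identities \eqref{conditions-1} and the strict inner-product inequalities \eqref{inner-prod-condition}. The entire argument is driven by the interpretation of $\phi$ as the asymptotic density profile of the coefficients of $r_n$; the three conditions on $\phi$ will turn out to be precisely the consistency identity and the two strictness requirements of that limit.

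First I would track the coordinate expansion of $r_n$. By induction on $n$, the element $r_n$ is supported in $\operatorname{span}(e_1,\dots,e_n)$, and the recursion $r_n = r_{n-1} - q_n d_n$ with $d_n = \gamma_n r_{n-1} + h_n + \xi_n e_n$ gives a scalar recursion for the coefficients $c_n(i) := \langle r_n, e_i\rangle$. Rescaling $c_n(i) \approx n^{-1/2+\beta}\,u_n(i/n)$ produces, in the limit $n \to \infty$, a transport-type equation on $[0,1]$ whose driving data are $\phi$ and $\beta$. The initial profile from $r_{K-1}$ is flat of the correct size, and because $\phi$ vanishes on $[0,\delta]$ the profile $u_n$ remains smooth away from the origin. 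The conditions \eqref{conditions-1}, combined with the orthogonality $\langle r_{n-1}, e_n\rangle = 0$ (which holds by the support property), yield three algebraic equations in the four unknowns $\gamma_n, \xi_n, \alpha_n, q_n$, leaving one degree of freedom to be fixed by requiring $q_n > 0$. Substituting the continuum ansatz gives the leading orders $\gamma_n = O(n^{-1})$, $q_n = O(n^{-3/2+\beta})$, $\xi_n = O(n^{-1/2+\beta})$, and $\alpha_n = O(1)$; the integral equality \eqref{phi-integral-condition-1} is exactly the compatibility condition under which the scaling $\|r_n\| = (n+1)^{-1/2+\beta}$ is preserved by the recursion, rather than drifting.

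The core analytic content is the verification of \eqref{inner-prod-condition}. For $k > n$, expanding $d_k = \gamma_k r_{k-1} + h_k + \xi_k e_k$ and using the recursive expression for $r_{k-1}$ in terms of $r_{n-1}$ decomposes $\langle r_{n-1}, d_k\rangle$ into pieces whose continuum limits are exactly the integral on the left-hand side of \eqref{integral-intequality-1}, with $a = n/k \in (0,1]$; its strict upper bound by $1$ translates, after dividing by $q_n$, into $|\langle r_{n-1}, d_k\rangle| < q_n$ asymptotically. For $N \le k < n$, an analogous unrolling of the recursion from step $k$ to step $n-1$ yields the integral on the left-hand side of \eqref{integral-intequality-2}, again parameterized by $a = k/n$, so that strictness there gives the needed bound. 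Finally, for $\tilde d_N = \epsilon r_N/\|r_N\| + \sqrt{1-\epsilon^2}\,d_N$, split $\langle r_{n-1}, \tilde d_N\rangle$ into its two summands: the second is controlled by the $k=N$ case already handled, and the first is at most $\epsilon\|r_{n-1}\|$, which can be made arbitrarily small relative to $q_n$ by taking $\epsilon$ small.

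The main obstacle is controlling the error between the discrete sums defining the $r_n, d_n, q_n$ and their continuum-limit integrals, \emph{uniformly} in the indices $n$ and $k$. The strictness of \eqref{integral-intequality-1}--\eqref{integral-intequality-2} provides a positive gap that must absorb the $O(1/N)$ correction terms; accordingly, one chooses the cut-off $K$ large so that the initial profile is already close to the continuum shape and $N$ large so that the propagated errors stay below this gap for every $n > N$ and every $k \geq N$ with $k \neq n$. Once this uniform approximation is in place, the discrete inequalities \eqref{inner-prod-condition} follow from the strict continuum inequalities, the parameter choices validated in the second paragraph satisfy \eqref{conditions-1}, and together they exhibit \eqref{inductive-definition-x}--\eqref{construction} as an honest realization of the pure greedy algorithm on $\mathbb{D}$ with initial residual $f$.
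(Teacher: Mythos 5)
Your overall plan — read the coefficients of $r_n$ through a continuum profile, derive \eqref{integral-intequality-1} from the $k>n$ case with $a \approx n/k$, derive \eqref{integral-intequality-2} from $N\le k < n$ with $a \approx k/n$, and let the strict gap in those inequalities absorb the Riemann-sum errors uniformly in $n,k$ — does match the paper's strategy. However, the specific scalings you assert are wrong in a way that would misstate the limiting integrals. The correct asymptotics are $q_n \sim \sqrt{1-2\beta}\,n^{\beta-1}$ (not $n^{-3/2+\beta}$; note $q_n^2 = n^{-1+2\beta}-(n+1)^{-1+2\beta}$), $\gamma_n = q_n^{-1}-q_{n-1}^{-1} \sim \frac{1-\beta}{\sqrt{1-2\beta}}\,n^{-\beta}$ (not $n^{-1}$), and — critically — $\xi_n\to 1$, not $O(n^{-1/2+\beta})$, since $\xi_n^2 = 1-\|\gamma_n r_{n-1}+h_n\|^2$ and both $\gamma_n\|r_{n-1}\|$ and $\|h_n\|$ are $O(n^{-1/2})$. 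That $d_k$ is asymptotically $e_k$ is what makes the term $\xi_k\sum_{i>k}\langle h_i,e_k\rangle$ produce the $\int_a^1\phi(x)\,dx/x$ piece of \eqref{integral-intequality-2}; if $\xi_n\to 0$ that contribution would vanish and you would not recover the stated condition. The exponents on $q_n,\gamma_n$ similarly feed into the $(\beta-1)$ prefactors, so carrying your scalings through the sum-to-integral comparisons would yield different integrals than the ones in the hypotheses.

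The $\tilde d_N$ estimate is a genuine gap. You bound $\langle r_{n-1}, \epsilon r_N/\|r_N\|\rangle$ by $\epsilon\|r_{n-1}\|$ via Cauchy--Schwarz and claim this is small relative to $q_n$ for small $\epsilon$. But $\|r_{n-1}\| = n^{-1/2+\beta}$ while $q_n \sim n^{\beta-1}$, so $\epsilon\|r_{n-1}\|/q_n \sim \epsilon\sqrt{n}\to\infty$: the bound fails for every fixed $\epsilon>0$ once $n$ is large. The remedy, as in the paper, is to propagate $\langle r_m,\tilde d_N\rangle$ through the recursion \eqref{eq-365}, which multiplies by $q_m/q_{m-1}$ at each step and yields the identity \eqref{inner-product-rel-ntildeN}, namely $\langle r_{n-1},\tilde d_N\rangle = -q_{n-1}\bigl(\sum_{i=N+1}^{n-1}\langle h_i,\tilde d_N\rangle - \epsilon\|r_N\|/q_N\bigr)$. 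The prefactor $q_{n-1}$ shows the $\epsilon$ contribution decays at the needed rate, and the bracketed sum is bounded uniformly in $n$ because $\phi$ vanishes on $[0,\delta]$, so a fixed small $\epsilon$ suffices. Without this recursive bookkeeping the $\tilde d_N$ case does not close.
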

This Proposition is proved in Section \ref{first-part-main-proposition-proof-section} and borrows may of the ideas from \cite{livshits2009lower}. The method in \cite{livshits2009lower} can essentially be viewed as choosing $\phi$ to be a smoothed version of
\begin{equation}
    c\chi_{[\tau,1]}(x) := \begin{cases}
        c & x \geq \tau\\
        0 & x < \tau,
    \end{cases}
\end{equation}
where $c$ is chosen to satisfy \eqref{phi-integral-condition-1}, and then optimizing in $\tau$ and $\beta$. By generalizing to an arbitrary smooth function $\phi$ we are able to improve upon this analysis. That we can close the gap between existing upper and lower bounds and determine the sharp exponent in the rate of convergence this way is contained in the following Proposition.
\begin{proposition}\label{second-part-main-proposition}
    For any $0 < \beta < \frac{1}{2}$ satisfying the condition \eqref{beta-condition}, there exists a smooth function $\phi:[0,1]\rightarrow \mathbb{R}_{\geq 0}$ satisfying the conditions of Proposition \ref{first-part-main-proposition}.
\end{proposition}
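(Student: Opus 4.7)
The plan is to make a closed-form parametric ansatz for $\phi$ and verify that the three conditions of Proposition~\ref{first-part-main-proposition} can be simultaneously satisfied whenever \eqref{beta-condition} holds. A natural guess is suggested by the observation that the differential expression in the integrand of \eqref{integral-intequality-1} can be rewritten as
$\phi'(x)x - (\beta-1)\phi(x) = x^\beta \frac{d}{dx}\bigl(x^{1-\beta}\phi(x)\bigr)$,
which vanishes identically when $\phi(x) \propto x^{\beta-1}$. I would therefore take
$\phi_{C,\tau}(x) = C\,x^{\beta-1}\chi_{[\tau,1]}(x)$
with free parameters $C>0$ and $\tau\in(\delta,1)$, treat it distributionally at first, and mollify at the end.

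Each of the three conditions can then be evaluated in closed form. The equality \eqref{phi-integral-condition-1} yields an algebraic relation among $C$, $\tau$, and $\beta$, which I would use to eliminate $C=C(\tau,\beta)$. For \eqref{integral-intequality-1}, the bulk integrand vanishes on $(\tau,1]$ by construction; only a boundary contribution from the jump at $x=\tau$ (rigorously, from the mollified transition) survives, and taking $\sup_{a\in[0,1]}$ yields an explicit expression in $\tau$ and $\beta$. A parallel closed-form evaluation handles \eqref{integral-intequality-2}. The problem reduces to a one-dimensional question: show that some $\tau\in(\delta,1)$ makes both suprema strictly less than $1$.

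I would then show by algebraic manipulation that such a $\tau$ exists precisely when \eqref{beta-condition} holds, with equality in \eqref{beta-condition} corresponding to the critical case where both suprema coincide at the value $1$ at a common optimal $\tau^{*}=\tau^{*}(\beta)$. Finally, for smoothness, I would replace $\phi_{C,\tau}$ by its convolution with a $C^\infty$ mollifier of scale $\eta \ll \tau - \delta$; since each of the three integral functionals is continuous in $\phi$ (say, in $L^{1}$), the strict inequalities persist for small $\eta$, while the equality constraint can be restored by an $O(\eta)$ adjustment of $C$.

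\textbf{Main obstacle.} The principal difficulty is computing the suprema over $a\in[0,1]$ in \eqref{integral-intequality-1} and \eqref{integral-intequality-2} and then matching the resulting one-parameter optimization in $\tau$ exactly to \eqref{beta-condition}. This requires case analysis on whether $a\lessgtr\tau$ (so that $a^{-1}x$ falls inside or outside the support of $\phi$), carefully identifying the critical $a$ via calculus, and verifying the algebraic identity that the specific exponents in $(\beta/(1-\beta))^\beta (1-\beta)^2/(1-2\beta)$ emerge from balancing the two suprema at $\tau^{*}$. A secondary technical point is handling the boundary-layer contribution at the mollified transition rigorously in the limit of vanishing mollification width.
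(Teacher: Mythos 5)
Your observation that $\phi'(x)x-(\beta-1)\phi(x)=x^\beta\frac{d}{dx}\bigl(x^{1-\beta}\phi(x)\bigr)$ vanishes for $\phi\propto x^{\beta-1}$ is clever, and the mollification scheme at the end is exactly what the paper does. But the ansatz $\phi(x)=Cx^{\beta-1}\chi_{[\tau,1]}(x)$ is not the optimal choice, and the claim that it reproduces \eqref{beta-condition} fails. The paper's actual route introduces the auxiliary functional
$F(a)=\int_\tau^a f(x)\bigl(1+\int_{a^{-1}x}^1 f(z)\tfrac{dz}{z}\bigr)dx$,
observes that after integration by parts condition \eqref{integral-intequality-1} becomes $\sup_{a}|aF'(a)-\beta F(a)|<1$ subject to the endpoint constraints $F(\tau)=0$ and $F(1)=\tfrac{\beta}{1-2\beta}$, and then chooses $F$ so that $aF'(a)-\beta F(a)\equiv c$ is \emph{constant}. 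This is the uniquely optimal choice: writing $g=aF'-\beta F$, one has $F(1)=\int_\tau^1 g(t)t^{-\beta-1}dt$, so $\|g\|_\infty\geq \tfrac{\beta^2}{(1-2\beta)(\tau^{-\beta}-1)}$ with equality precisely when $g$ is constant. Your ansatz yields $F(a)=A_1a^\beta+A_2a^{1-\beta}+A_3$ with $A_2=-\tfrac{C^2\tau^{2\beta-1}}{(1-\beta)(2\beta-1)}\neq 0$, so $aF'-\beta F=(1-2\beta)A_2a^{1-\beta}-\beta A_3$ is genuinely non-constant and its supremum strictly exceeds the optimum. Consequently the one-dimensional optimization over $\tau$ would produce a condition on $\beta$ strictly stronger than \eqref{beta-condition}, and the construction would fail exactly for $\beta$ near the critical value $\beta^*\approx 0.318$ that the proposition is designed to reach.

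The second, equally serious gap: the $f$ corresponding to the optimal $F$ has no closed form. Writing $G(a):=F'(a)=c\tau^{-\beta}a^{\beta-1}$, the function $f$ is defined only implicitly by the nonlinear integral equation
$f(a)+a^{-1}\int_\tau^a f(x)f(a^{-1}x)\,dx=G(a)$,
and one can check directly that $f\propto a^{\beta-1}$ does not solve it (the nonlinear term introduces an $a^{-\beta}$ contribution that the right-hand side lacks). The paper therefore proves existence of $f$ via a Schauder fixed-point argument (Lemma~\ref{schauder-lemma}), with hypotheses $R_G<1$ and $f_3>0$ verified numerically. This existence step is not a technical afterthought but the heart of the proof, and your plan of a closed-form power-law ansatz bypasses it entirely, which is precisely why it cannot reach the sharp threshold.
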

This Proposition is proved in Section \ref{second-part-main-proposition-proof-section}. Combining these results, we obtain Theorem \ref{main-theorem}.

\section{Proof of Proposition \ref{first-part-main-proposition}}\label{first-part-main-proposition-proof-section}
We begin by determining conditions on the values $\gamma_n,\xi_n,\alpha_n$ and $q_n$ which are required to ensure that \eqref{conditions-1} is satisfied.
We calculate $q_n$ by noting that
\begin{equation}
 \|r_n\|^2 = \|r_{n-1}\|^2 - q_n^2,
\end{equation}
so that
\begin{equation}\label{q-definition}
 q_n^2 = \|r_{n-1}\|^2 - \|r_n\|^2 = n^{-1 + 2\beta} - (n+1)^{-1 + 2\beta}.
\end{equation}
We are free to choose $\gamma_n$, and following the argument in \cite{livshits2009lower} we make the choice $\gamma_n=(q_n^{-1} - q_{n-1}^{-1})$.
The conditions \eqref{conditions-1} and the construction \eqref{inductive-definition-x} imply that 
\begin{equation}
 q_n = \langle r_{n-1}, d_n\rangle = \gamma_n\|r_{n-1}\|^2 + \langle r_{n-1}, h_n\rangle = \gamma_nn^{-1+2\beta} + \langle r_{n-1}, h_n\rangle,
\end{equation}
which means that $\alpha_n$ must be chosen so that $h_n$ satisfies
\begin{equation}\label{condition-on-h}
 \langle r_{n-1}, h_n\rangle = q_n - \gamma_nn^{-1+2\beta}.
\end{equation}
Next, since $h_n\in \text{span}(e_1,...,e_{n-1})$, we see by induction that $d_n, r_n\in \text{span}(e_1,...,e_n)$. Using the construction \eqref{inductive-definition-x} this implies that
\begin{equation}\label{condition-on-xi}
 1 = \|d_n\|^2 = \|\gamma_nr_{n-1} + h_n\|^2 + \xi_n^2.
\end{equation}
This means that $h_n$ must also satisfy
\begin{equation}\label{condition-norm-hn}
    \|\gamma_nr_{n-1} + h_n\| \leq 1.
\end{equation}
Finally, we solve equation \eqref{condition-on-xi} for $\xi_n$ and choose $\xi_n = \sqrt{1 - \|\gamma_nr_{n-1} + h_n\|^2}$ to be the positive root.

\subsection{Inner Product Formulas}
Next, we derive formulas for the inner products appearing in \eqref{inner-prod-condition}, following \cite{livshits2009lower,livshitz2003two}. Consider first the case $k < n$. We will proceed by induction on $m > k$. Combining the last two equations in \eqref{inductive-definition-x} we see that
\begin{equation}\label{inductive-relationship-r-q}
 r_m = (1-q_m\gamma_m)r_{m-1} - q_mh_m - q_m\xi_me_m.
\end{equation}
Plugging the choice $\gamma_m = q_m^{-1} - q_{m-1}^{-1}$ into this, we get
\begin{equation}\label{eq-365}
 r_m = \frac{q_m}{q_{m-1}}r_{m-1} - q_mh_m - q_m\xi_me_m.
\end{equation}
Taking the inner product with $d_k$, noting that $d_k\in \text{span}(e_1,...,e_k)$, and that $m > k$ by assumption, we get
\begin{equation}\label{induction-equation-433}
 \langle r_m, d_k\rangle = \frac{q_m}{q_{m-1}}\langle r_{m-1}, d_k\rangle - q_m\langle h_m, d_k\rangle.
\end{equation}
Using \eqref{conditions-1} and the last line in \eqref{inductive-definition-x}, we see that by construction $\langle r_k, d_k\rangle = 0$. From this and equation \eqref{induction-equation-433} we see by induction on $m$ that
\begin{equation}\label{inner-product-rel-ngk}
 \langle r_{n-1}, d_k\rangle = -q_{n-1}\left(\sum_{i=k+1}^{n-1} \langle h_i,d_k\rangle\right),
\end{equation}
for $k < n$. 

The case of $\langle r_{n-1},\tilde{d}_N\rangle$ is a bit more complicated. We first note that since $\langle r_N,d_N\rangle = 0$ the formula \eqref{construction} implies that
\begin{equation}\label{initial-inner-product}
 \langle f, \tilde{d}_N\rangle = \epsilon\|r_N\|.
\end{equation}
Taking the inner product of \eqref{eq-365} with $\tilde{d}_N$, we see that 
\begin{equation}\label{inductive-relationship-d-N-tilde-inner}
    \langle r_m, \tilde{d}_N\rangle = \frac{q_m}{q_{m-1}}\langle r_{m-1}, \tilde{d}_N\rangle - q_m\langle h_m, \tilde{d}_N\rangle.
\end{equation}
Combining the base case \eqref{initial-inner-product} and the inductive step \eqref{inductive-relationship-d-N-tilde-inner}, we get
\begin{equation}\label{inner-product-rel-ntildeN}
 \langle r_{n-1}, \tilde{d}_N\rangle = -q_{n-1}\left(\sum_{i=N+1}^{n-1} \langle h_i,\tilde{d}_N\rangle - \epsilon\frac{\|r_N\|}{q_N}\right).
\end{equation}

Next, we consider the case $k > n$. In this case the relations \eqref{inductive-definition-x} give, for $k > n$
\begin{equation}\label{eq-382}
 \langle r_{n-1}, d_{k}\rangle = \gamma_k\langle r_{n-1}, r_{k-1}\rangle + \langle r_{n-1}, h_k\rangle,
\end{equation}
since $r_{n-1}\in \text{span}(e_1,...,e_{n-1})$. Further, we see that
\begin{equation}\label{eq-386}
 \langle r_{n-1}, r_{k-1}\rangle = \langle r_{n-1}, r_{k-2}\rangle - q_{k-1}\langle r_{n-1}, d_{k-1}\rangle.
\end{equation}
Shifting the indices in \eqref{eq-382} we also get
\begin{equation}
 \langle r_{n-1}, d_{k-1}\rangle = \gamma_{k-1}\langle r_{n-1}, r_{k-2}\rangle + \langle r_{n-1}, h_{k-1}\rangle.
\end{equation}
Solving this equation for $\langle r_{n-1}, r_{k-2}\rangle$ gives
\begin{equation}
    \langle r_{n-1}, r_{k-2}\rangle = \gamma_{k-1}^{-1}(\langle r_{n-1}, d_{k-1}\rangle - \langle r_{n-1}, h_{k-1}\rangle),
\end{equation}
and plugging that back into equation \eqref{eq-386} gives
\begin{equation}
    \langle r_{n-1}, r_{k-1}\rangle = (\gamma_{k-1}^{-1} - q_{k-1})\langle r_{n-1}, d_{k-1}\rangle - \gamma_{k-1}^{-1}\langle r_{n-1}, h_{k-1}\rangle.
\end{equation}
Finally, plugging this into \eqref{eq-382}, we get that
\begin{equation}\label{inductive-relation-kgn}
 \langle r_{n-1}, d_{k}\rangle = (\gamma_{k-1}^{-1} - q_{k-1})\gamma_k\langle r_{n-1}, d_{k-1}\rangle + \left\langle r_{n-1}, h_k - \frac{\gamma_k}{\gamma_{k-1}}h_{k-1}\right\rangle.
\end{equation}

\subsection{Parameter Asymptotics}
The estimates required to prove Proposition \ref{first-part-main-proposition} are quite technical and require the comparison of a variety of sums with corresponding integrals. For this purpose, we will need asymptotic formulas for some of the parameters introduced so far. These asymptotics were first derived in \cite{livshits2009lower,livshitz2003two}. 

We have from \eqref{q-definition} that
\begin{equation}
 q_n^2 = (1 - 2\beta)n^{-2+2\beta}(1 + c/n + O(n^{-2})),
\end{equation}
for a constant $c$ (the exact value of $c$ will not be important for us). This implies that
\begin{equation}\label{q-asymptotics}
 q_n = \sqrt{1 - 2\beta}n^{\beta - 1}(1 - 2\beta)n^{-2+2\beta}(1 + c'/n + O(n^{-2})),
\end{equation}
for a (different) constant $c'$.
From this, we obtain the following asymptotics for $\gamma_n$,
\begin{equation}\label{gamma-asymptotics}
 \gamma_n = (q_n^{-1} - q_{n-1}^{-1}) = \frac{(1-\beta)}{\sqrt{1 - 2\beta}}n^{-\beta}(1 + O(n^{-1})).
\end{equation}
These two formulas give us the asymptotics of the right hand side of \eqref{condition-on-h}
\begin{equation}\label{inner-product-h-asymptotics}
 q_n - \gamma_nn^{-1+2\beta} = \left(\sqrt{1 - 2\beta} - \frac{(1-\beta)}{\sqrt{1 - 2\beta}}\right)n^{\beta - 1}(1 + O(n^{-1}))
 = \frac{-\beta}{\sqrt{1 - 2\beta}}n^{\beta - 1}(1 + O(n^{-1})).
\end{equation}

Finally, we will also need the asymptotics for the inductive factor in \eqref{inductive-relation-kgn}
\begin{equation}
 (\gamma_{m-1}^{-1} - q_{m-1})\gamma_m = \frac{\gamma_m}{\gamma_{m-1}}(1-\gamma_{m-1}q_{m-1}) = \frac{\gamma_m}{\gamma_{m-1}}\frac{q_{m-1}}{q_{m-2}}.
\end{equation}
The asymptotics for $\gamma_m$ \eqref{gamma-asymptotics} and the asymptotics for $q_n$ \eqref{q-asymptotics} imply that
\begin{equation}\label{quotient-asymptotics}
 \frac{\gamma_m}{\gamma_{m-1}} = \left(1 - \frac{\beta}{m} + o(m^{-1})\right),~\frac{q_{m-1}}{q_{m-2}} = \left(1 + \frac{\beta - 1}{m} + o(m^{-1})\right).
\end{equation}
Multiplying these, we get
\begin{equation}\label{inductive-factor-asymptotics}
 (\gamma_{m-1}^{-1} - q_{m-1})\gamma_m = (1 - (1 + o_m(1))m^{-1}).
\end{equation}
Note that here and in the following, we use the notation $o_k(1)$ to denote a quantity depending upon the parameter $k$ whose limit is $0$ as $k\rightarrow\infty$. 
We also adopt the convention that the constants in big-$O$ expressions and the rate of decay to $0$ in little-$o$ expressions will be uniform in any lower case indices $n,m,k,$ or $l$ which do not appear in the argument (or the subscript if the argument is $1$), but may depend upon other parameters of the construction. This modified big-$O$ notation is necessary to simplify the comparisons between sums and integrals in the following.

Finally, we recall the basic fact, which we will use without reference in the following, that (using this convention)
\begin{equation}
 \frac{1}{n}\sum_{k=1}^n o_k(1) = o_n(1).
\end{equation}

\subsection{Estimates for the Conditions \eqref{conditions-1}}



In this Section, we show that for sufficiently large $K$, the parameters $\xi_n$ and $\alpha_n$ can be chosen to satisfy the conditions \eqref{conditions-1}. For this, we will need the following formula for the coefficients of $r_n$, which will also be useful later.

\begin{lemma}\label{components-of-r-formula}
 The following formula holds for $K \leq k \leq n$
 \begin{equation}\label{large-components-of-r}
  \langle r_n, e_k\rangle = -q_n\left(\xi_k + \sum_{j=k+1}^n \langle h_j, e_k\rangle\right).
 \end{equation}
 Furthermore, for $0 < k \leq K-1\leq n$, we have
 \begin{equation}\label{small-components-of-r}
  \langle r_n, e_k\rangle = -q_n\left(\frac{K^{-1/2+\beta}}{q_{K-1}\sqrt{K-1}} + \sum_{j=K}^n \langle h_j, e_k\rangle\right).
 \end{equation}
\end{lemma}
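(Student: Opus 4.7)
The plan is to prove both formulas by induction on $n$ using the recursion for $r_n$ that has already been derived in equation \eqref{eq-365} of the excerpt, namely
\begin{equation*}
 r_n = \frac{q_n}{q_{n-1}} r_{n-1} - q_n h_n - q_n \xi_n e_n.
\end{equation*}
Taking the inner product with $e_k$ yields the one-step recursion
\begin{equation*}
 \langle r_n, e_k\rangle = \frac{q_n}{q_{n-1}} \langle r_{n-1}, e_k\rangle - q_n \langle h_n, e_k\rangle - q_n \xi_n \delta_{n,k},
\end{equation*}
and both desired identities are of the schematic form $\langle r_n, e_k\rangle = -q_n S_n$, where $S_n$ differs from $S_{n-1}$ only by the addition of $\langle h_n, e_k\rangle$. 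Thus the recursion above, after dividing by $-q_n$, is exactly what the induction on $n$ requires, provided the base cases are correct and the Kronecker term $\delta_{n,k}$ is handled.

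For \eqref{large-components-of-r}, I would take the base case to be $n = k$. Since $h_n$ is built from $e_1, \ldots, e_{n-1}$ (from the definition \eqref{definition-of-h}), an easy induction shows $r_{n-1} \in \mathrm{span}(e_1, \ldots, e_{n-1})$, so both $\langle r_{k-1}, e_k\rangle$ and $\langle h_k, e_k\rangle$ vanish. The one-step recursion then gives $\langle r_k, e_k\rangle = -q_k \xi_k$, which matches the formula with the empty sum. For the inductive step $n > k$, the Kronecker term vanishes and the relation becomes
\begin{equation*}
 \langle r_n, e_k\rangle = \frac{q_n}{q_{n-1}}\Bigl(-q_{n-1}\Bigl(\xi_k + \sum_{j=k+1}^{n-1}\langle h_j, e_k\rangle\Bigr)\Bigr) - q_n\langle h_n, e_k\rangle,
\end{equation*}
which simplifies directly to the claimed expression.

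For \eqref{small-components-of-r} the argument is essentially identical, with a different base case. I would take $n = K-1$ and read off $\langle r_{K-1}, e_k\rangle = -K^{-1/2+\beta}/\sqrt{K-1}$ directly from the initial definition in \eqref{inductive-definition-x}, which coincides with the claimed formula in view of the empty sum. For $n \geq K$ the index $k \leq K-1 < n$ ensures $\delta_{n,k} = 0$, so the same algebraic step as above completes the induction.

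The proof requires no essentially new idea; the only small points to be careful about are (i) verifying that $h_j \in \mathrm{span}(e_1, \ldots, e_{j-1})$ so the $\delta_{n,k}$ term does not contaminate the induction when $k < n$, and (ii) matching the constants in the base case for the second identity, where the factor $K^{-1/2+\beta}/(q_{K-1}\sqrt{K-1})$ is tuned precisely so that after multiplying by $-q_{K-1}$ one recovers the coefficient appearing in the definition of $r_{K-1}$. There is no genuine obstacle; the lemma is essentially a bookkeeping consequence of the recursion \eqref{eq-365} together with the support property of the vectors $h_n$.
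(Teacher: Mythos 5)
Your proposal is correct and follows essentially the same route as the paper: induction on $n$ with base cases $n=k$ (for $k\geq K$) and $n=K-1$ (for $k\leq K-1$), with the inductive step driven by taking the inner product of the recursion \eqref{eq-365} with $e_k$. The only cosmetic difference is that the paper computes the base case directly from $r_k = r_{k-1} - q_k d_k$ rather than from \eqref{eq-365}, but these are equivalent; your explicit tracking of the Kronecker delta and the support property $h_j\in\mathrm{span}(e_1,\dots,e_{j-1})$ is the same bookkeeping made a bit more explicit.
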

Since we also have $\langle r_n, e_k\rangle = 0$ for $k > n$, this lemma gives the components of all of the iterates $r_n$.
\begin{proof}
 We prove this formula for any fixed $k > 0$ by induction on $n$, with the base case being $n = \max(K-1,k)$. The base case follows in the case where $n = k\geq K$ from
 \begin{equation}
  \langle r_k,e_k\rangle = \langle r_{k-1} - q_k d_k, e_k\rangle = -q_k\langle d_k, e_k\rangle = -q_k \xi_k,
 \end{equation}
 since $r_{k-1}, h_k\in \text{span}(e_1,...,e_{n-1})$. When $k \leq K-1$, we use the definition of $r_{K-1}$ to get \eqref{small-components-of-r} in the case $n = K-1$.
 
 The inductive step follows from \eqref{eq-365} since
 \begin{equation}
  \langle r_n, e_k\rangle = \frac{q_n}{q_{n-1}}\langle r_{n-1}, e_k\rangle - q_n\langle h_n, e_k\rangle.
 \end{equation}
Using the inductive hypothesis, this gives
\begin{equation}
 \langle r_n, e_k\rangle = -\frac{q_n}{q_{n-1}}q_{n-1}\left(\xi_k + \sum_{j=k+1}^{n-1} \langle h_j, e_k\rangle\right) - q_n\langle h_n, e_k\rangle = -q_n\left(\xi_k + \sum_{j=k+1}^n \langle h_j, e_k\rangle\right),
\end{equation}
when $k \geq K$, and similarly when $k\leq K-1$.

\end{proof}

Using this, we prove the following.
\begin{proposition}\label{large-K-lemma}
 Let $\phi:[0,1]\rightarrow \mathbb{R}_{\geq 0}$ be a non-zero $C^\infty$ function. Then for sufficiently large $K$, equations \eqref{definition-of-h} and \eqref{condition-on-h} are solvable for $\alpha_n$. In addition, $\alpha_n$ is non-negative and the resulting value of $h_n$ satisfies \eqref{condition-norm-hn}, so that $\xi_n$ is defined via \eqref{condition-on-xi}. Finally, the construction satisfies
 \begin{equation}
  \lim_{n\rightarrow \infty} \xi_n = 1.
 \end{equation}
\end{proposition}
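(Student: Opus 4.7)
The plan is to derive an explicit formula for $\alpha_n$ from \eqref{condition-on-h} and Lemma \ref{components-of-r-formula}, and then carry out a simultaneous induction on $n \geq K$ controlling $\alpha_n$ and $\xi_n$. Substituting the definition \eqref{definition-of-h} of $h_n$ into \eqref{condition-on-h} gives the linear equation $-\alpha_n q_{n-1} S_n = q_n - \gamma_n n^{-1+2\beta}$, where
\begin{equation}
S_n = \frac{1}{n}\sum_{i=1}^{n-1}\phi(i/n)\,v_i^{(n)}, \qquad v_i^{(n)} := -q_{n-1}^{-1}\langle r_{n-1},e_i\rangle.
\end{equation}
By Lemma \ref{components-of-r-formula}, $v_i^{(n)} = \xi_i + \sum_{j=i+1}^{n-1}(\alpha_j/j)\phi(i/j) \geq 0$ for $K \leq i \leq n-1$, with an analogous nonnegative expression (whose constant term depends only on $K$) for $1 \leq i \leq K-1$. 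Since $\phi \geq 0$, we have $S_n \geq 0$, while the right-hand side satisfies $q_n - \gamma_n n^{-1+2\beta} \sim -\beta n^{\beta-1}/\sqrt{1-2\beta} < 0$ by \eqref{inner-product-h-asymptotics}. Thus solvability of the equation with $\alpha_n \geq 0$ reduces to showing that $S_n$ is bounded below by a positive constant.

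I would prove by strong induction that, for $K$ chosen sufficiently large, there is a constant $C_1$ independent of $n$ such that $\alpha_j \leq C_1$ and $\xi_j \geq 1/2$ for all $K \leq j \leq n$. Under this inductive hypothesis, $v_i^{(n)} \geq \xi_i \geq 1/2$ for $K \leq i \leq n-1$, whence
\begin{equation}
S_n \geq \frac{1}{2n}\sum_{i=K}^{n-1}\phi(i/n) \longrightarrow \frac{1}{2}\int_0^1 \phi(x)\,dx > 0
\end{equation}
by Riemann sum convergence, using $\phi \not\equiv 0$ and $\phi \geq 0$. Combined with the asymptotics \eqref{q-asymptotics} and \eqref{inner-product-h-asymptotics}, this yields an estimate of the form $\alpha_n \leq 2\beta/[(1-2\beta)\int_0^1 \phi] + o_n(1)$, closing the upper bound on $\alpha_n$. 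The base case $n = K$ is a direct calculation using the explicit form of $r_{K-1}$: the inner double sums in the formula for $v_i^{(K)}$ are empty, $S_K$ reduces to a single Riemann sum that tends to $(\int_0^1\phi)/\sqrt{1-2\beta}$ as $K \to \infty$, and $\alpha_K$ is therefore bounded.

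With $\alpha_n \leq C_1$ established uniformly, we obtain $\|h_n\|^2 = \alpha_n^2 n^{-2}\sum_{i=1}^{n-1}\phi(i/n)^2 = O(n^{-1})$, and $\|\gamma_n r_{n-1}\|^2 = \gamma_n^2 n^{-1+2\beta} \sim (1-\beta)^2/(1-2\beta)\cdot n^{-1}$ by \eqref{gamma-asymptotics}. Consequently $\|\gamma_n r_{n-1} + h_n\| \to 0$, which simultaneously verifies \eqref{condition-norm-hn} for $K$ large, justifies the definition \eqref{condition-on-xi} of $\xi_n = \sqrt{1-\|\gamma_n r_{n-1}+h_n\|^2}$, and gives both $\xi_n \geq 1/2$ (propagating the induction) and $\xi_n \to 1$. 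The main obstacle is the apparent circularity of the induction: the upper bound on $\alpha_n$ rests on a lower bound for $S_n$, which rests on lower bounds for the $\xi_j$ with $j < n$, whereas $\xi_n \geq 1/2$ itself depends on $\alpha_n$ being bounded via \eqref{condition-norm-hn}. This is resolved by packaging both estimates into a single induction with constants independent of $n$, and by taking $K$ large enough to render the Riemann-sum approximations effective and the finitely many boundary contributions from the indices $i \leq K-1$ negligible.
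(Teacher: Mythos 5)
Your proposal is correct and follows essentially the same route as the paper's proof: both set up a simultaneous induction on $n \geq K$ maintaining $\alpha_j \geq 0$ and $\xi_j \geq 1/2$, use Lemma \ref{components-of-r-formula} together with the nonnegativity of the double-sum term to bound $\langle r_{n-1},h_n\rangle$ (equivalently, to lower-bound your $S_n$), compare the remaining Riemann sum with $\int_0^1\phi$, invoke the asymptotics \eqref{q-asymptotics}, \eqref{gamma-asymptotics}, \eqref{inner-product-h-asymptotics} to get a uniform upper bound on $\alpha_n$, and then show $\|\gamma_n r_{n-1}+h_n\|\to 0$ to close the induction on $\xi_n$ and conclude $\xi_n\to 1$. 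The only differences are cosmetic: you introduce the auxiliary quantity $S_n$ and spell out the base case $n=K$ and the resolution of the apparent circularity more explicitly than the paper, which simply states "a straightforward calculation shows that $K$ can be chosen sufficiently large."
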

\begin{proof}
 Note that since $\phi\neq 0$ is non-negative, we have
 \begin{equation}
  \int_0^1 \phi(x)dx = C_\phi > 0.
 \end{equation}

 We begin by calculating the inner product $\langle r_{n-1},h_n\rangle$ for $n \geq K$ using Lemma \ref{components-of-r-formula} and the definition of $h_n$ in \eqref{definition-of-h}. We get
 \begin{equation}\label{residual-h-inner-formula}
  \begin{split}
  \langle r_{n-1},h_n\rangle = \sum_{k=1}^{n-1}\langle r_{n-1},e_k\rangle\langle h_n,e_k\rangle = &-\frac{q_{n-1}\alpha_n}{n}\left(\sum_{k=1}^{n-1}\sum_{j=\max(k+1,K)}^{n-1}\frac{\alpha_j}{j}\phi\left(\frac{k}{n}\right)\phi\left(\frac{k}{j}\right) \right.\\ &+ \left.\frac{K^{-\frac{1}{2} + \beta}}{q_{K-1}\sqrt{K-1}}\sum_{k=1}^{K-1}\phi\left(\frac{k}{n}\right) + \sum_{k=K}^{n-1}\xi_k\phi\left(\frac{k}{n}\right) \right).\\
  &
  \end{split}
 \end{equation}
  For sufficiently large $K$, we now show by induction on $n \geq K$ that  we can solve \eqref{definition-of-h} and \eqref{condition-on-h} for $\alpha_n \geq 0$, and that the resulting $h_n$ will satisfy \eqref{condition-on-h} and $\|\gamma_nr_{n-1} + h_n\| \leq \sqrt{3}/2$, which implies that for the $\xi_n$ satisfying \eqref{condition-on-xi} we will have $\xi_n \geq 1/2$.

So assume inductively that for all $K \leq k < n$ we have $\xi_k \geq \frac{1}{2}$ and $\alpha_k \geq 0$. (Note that in the base case $n=K$ there is nothing to assume.) Since $\phi \geq 0$, we use the inductive assumption on $\alpha_k$ to remove the initial double sum on the right hand side of \eqref{residual-h-inner-formula} and get the inequality
 \begin{equation}\label{eq-495}
  \langle r_{n-1},h_n\rangle \leq -\frac{q_{n-1}\alpha_n}{n}\left(\frac{K^{-\frac{1}{2} + \beta}}{q_{K-1}\sqrt{K-1}}\sum_{k=1}^{K-1}\phi\left(\frac{k}{n}\right) + \sum_{k=K}^{n-1}\xi_k\phi\left(\frac{k}{n}\right)\right).
 \end{equation}
 Using the asymptotics \eqref{q-asymptotics}, \eqref{gamma-asymptotics} and \eqref{inner-product-h-asymptotics}, and comparing the Riemann sum in the previous equation with an integral, a straightforward calculation shows that $K$ can be chosen sufficiently large so that $0 \leq \alpha_n \leq C$ (for an appropriate constant $C$) and 
 \begin{equation}
     \|\gamma_nr_{n-1} + h_n\| \leq \gamma_n\|r_{n-1}\| + \|h_n\| \leq \frac{\sqrt{3}}{2},
 \end{equation}
 which implies that $\xi_n \geq \frac{1}{2}$. Moreover, since $\alpha_n\leq C$ we readily see that
 \begin{equation}
     \|\gamma_nr_{n-1} + h_n\| \leq \gamma_n\|r_{n-1}\| + \|h_n\| \leq \gamma_n\|r_{n-1}\| + \alpha_n\left[\sup_{x\in [0,1]}\phi(x)\right]n^{-\frac{1}{2}}\rightarrow 0,
 \end{equation}
 as $n\rightarrow \infty$, which implies that $\lim_{n\rightarrow \infty} \xi_n = 1$.

\end{proof}

\subsection{Estimates for the Inner Product Conditions}
In the following analysis, we assume that $K$ has been chosen sufficiently large so that the conclusion of Proposition \ref{large-K-lemma} holds, and turn to proving that the conditions \eqref{inner-prod-condition} are satisfied under the assumptions of Proposition \ref{first-part-main-proposition} for sufficiently large $N$ and small $\epsilon$.

For this, we will need the following estimate relating sums to integrals of $\phi$.
\begin{lemma}\label{phi-integral-estimates}
 Suppose that $\phi:[0,1]\rightarrow \mathbb{R}_{\geq 0}$ is a $C^\infty$ function such that for some $\delta > 0$, we have $\phi(x) = 0$ on $[0,\delta]$. Let $n \geq l \geq k$, then we have
 \begin{equation}
  \sum_{j=l}^{n-1} \frac{1}{j}\phi\left(\frac{k}{j}\right) = \int_{\frac{k}{n}}^{\frac{k}{l}} \phi(x) \frac{dx}{x} + O(k^{-1}).
 \end{equation}
 Here the constant in $O(k^{-1})$ only depends upon $\phi$ and not upon $n$ and $l$.
\end{lemma}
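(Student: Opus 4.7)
The plan is to view the sum on the left as a non-uniform Riemann sum for $\int \phi(x)\,dx/x$ with sample points $x_j = k/j$ and step sizes $\Delta_j = x_j - x_{j+1} = k/(j(j+1))$, and to control the two resulting error terms using the hypothesis $l \geq k$.

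First I would set $g(x) = \phi(x)/x$. Because $\phi$ vanishes on $[0,\delta]$, the function $g$ extends to a $C^\infty$ function on $[0,1]$, so $\|g'\|_\infty$ is finite and depends only on $\phi$. A direct computation gives $g(x_j)\Delta_j = \phi(k/j)/(j+1)$, so the right-endpoint Riemann sum associated to the partition $\{x_j\}_{j=l}^{n}$ equals $\sum_{j=l}^{n-1}\phi(k/j)/(j+1)$. Comparing with the sum in the statement produces a correction $\sum_{j=l}^{n-1}\phi(k/j)/(j(j+1))$, bounded by $\|\phi\|_\infty \sum_{j\geq l} j^{-2} = O(1/l)$.

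Second, I would estimate the Riemann approximation error itself. On each subinterval $[x_{j+1},x_j]$ the mean-value theorem applied to $g$ gives $\left|\int_{x_{j+1}}^{x_j} g(x)\,dx - g(x_j)\Delta_j\right| \leq \tfrac{1}{2}\|g'\|_\infty \Delta_j^2$. Summing in $j$ yields a total error bounded by a constant (depending only on $\phi$) times $k^2 \sum_{j\geq l} j^{-4} = O(k^2/l^3)$. Using the hypothesis $l \geq k$, both $1/l$ and $k^2/l^3$ are $O(1/k)$, and adding the two contributions gives the claimed estimate with constants depending only on $\phi$.

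The main (and essentially only) obstacle is the uniformity of the implicit constant in $n$ and $l$, and this is precisely what the hypothesis $l \geq k$ delivers: it caps the maximum step size at $\max_j \Delta_j \leq k/l^2 \leq 1/k$, so the second-order Riemann errors aggregate to $O(1/k)$ rather than something larger. Without such a lower bound on $l$ the sample spacing could exceed $1/k$ and the Riemann approximation would fail at the required rate; with it, the rest is a routine comparison of sums with integrals.
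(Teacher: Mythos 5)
Your argument is correct and follows the same basic strategy as the paper: both proofs view the sum as a Riemann-type approximation to $\int_{k/n}^{k/l}\phi(x)\,dx/x$ over the partition $x_j = k/j$, isolate a bookkeeping correction of order $\sum_{j\geq l}j^{-2}=O(1/l)$, and use $l\geq k$ to control the mesh. The only cosmetic difference is that the paper works with the Riemann--Stieltjes sum $\sum\phi(x_j)\,[\log x_j-\log x_{j-1}]$ against $d\log(x)$ and invokes the uniform mesh bound $|\Delta|<1/k$ together with smoothness of $\phi$, whereas you keep the ordinary measure $dx$, absorb the $1/x$ into the integrand $g=\phi/x$ (which is $C^\infty$ precisely because $\phi$ vanishes near $0$), and aggregate the second-order local errors $\tfrac12\|g'\|_\infty\Delta_j^2$ to get $O(k^2/l^3)\leq O(1/k)$. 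Your version is marginally more explicit about why the Riemann error is $O(1/k)$ rather than, say, $O(|\Delta|\cdot\log(n/l))$, but the two proofs are essentially the same.
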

\begin{proof}
 Consider the sequence of $n-l$ points 
 \begin{equation}
  x_0 := \frac{k}{n} < x_1 := \frac{k}{n-1} < \cdots < x_{n-l-1} := \frac{k}{l+1} < x_{n-l} := \frac{k}{l}.
 \end{equation}
 This sequence of points forms a partition $\Delta$ of the interval $[\frac{k}{n},\frac{k}{l}]$ and the gap (or mesh/norm) of $\Delta$ satisfies
 \begin{equation}
  |\Delta| := \sup_i |x_{i+1} - x_i| \leq \frac{k}{l} - \frac{k}{l+1} = \frac{k}{l(l+1)} < \frac{1}{k}.
 \end{equation}
 Moreover, we have the following estimate
 \begin{equation}
  \log\left(\frac{k}{j}\right) - \log\left(\frac{k}{j + 1}\right) = \log\left(\frac{j+1}{j}\right) = \log\left(1 + \frac{1}{j}\right) = \frac{1}{j} + O\left(\frac{1}{j^2}\right).
 \end{equation}
 Since $\phi$ is bounded, we obtain
 \begin{equation}
 \begin{split}
  \sum_{j=l}^{n-1} \frac{1}{j}\phi\left(\frac{k}{j}\right) &= \sum_{j=1}^{n-l} \phi\left(\frac{k}{j}\right)\left[\log\left(\frac{k}{j}\right) - \log\left(\frac{k}{j + 1}\right)\right] + O\left(\sum_{j=l}^{n-1} \frac{1}{j^2}\right) \\
  &= \sum_{i=1}^{n-l} \phi(x_i)[\log(x_i) - \log(x_{i-1})] + O(k^{-1}).
  \end{split}
 \end{equation}
 Finally, since $\phi$ is smooth and $\log(x)$ is increasing and bounded on $[\delta,1]$, we obtain, by comparing with the Riemann-Stieltjes integral and using that the mesh $\Delta$ satisfies $|\Delta| < k^{-1}$,
 \begin{equation}
  \sum_{i=1}^{n-l} \phi(x_i)[\log(x_i) - \log(x_{i-1})] = \int_{\frac{k}{n}}^{\frac{k}{l}} \phi(x) d\log(x) + O(k^{-1}) = \int_{\frac{k}{n}}^{\frac{k}{l}} \phi(x) \frac{dx}{x} + O(k^{-1}).
 \end{equation}

\end{proof}

Similar to the argument given in \cite{livshits2009lower}, the next step is to carefully study the sequence $\alpha_n$. We also give an asymptotic formula for the coefficients of $r_n$.
\begin{proposition}\label{bar-alpha-lemma}
 Suppose that $\phi$ satisfies the condition \eqref{phi-integral-condition-1} in Proposition \ref{first-part-main-proposition}.
 Then, in the preceding construction, with $K$ chosen large enough such that the conclusion of Proposition \ref{large-K-lemma} holds, the sequence $\alpha_n$ in \eqref{definition-of-h} will satisfy
 \begin{equation}\label{alpha-limit-equation}
  \lim_{n\rightarrow \infty} \alpha_n = 1.
 \end{equation}
 More precisely, this convergence will be such that
 \begin{equation}\label{alpha-asymptotics-equation}
  \lim_{n\rightarrow \infty} \frac{\alpha_{n+1}}{\alpha_n} = 1 + o(n^{-1}).
 \end{equation}
 In addition, under these conditions as $k\rightarrow \infty$ we have the following asymptotics for the coefficients of $r_n$
 \begin{equation}\label{asymptotics-coefficients-of-r}
  \langle r_n, e_k\rangle = -q_n\left(1 + \int_\frac{k}{n}^1 \phi(z)\frac{dz}{z} + o_k(1)\right),
 \end{equation}
 for any $n \geq k$. (Recall that by our convention $o_k(1)$ goes to $0$ as $k\rightarrow \infty$ uniformly in $n$.)
\end{proposition}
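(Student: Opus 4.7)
The plan is to convert the constraint \eqref{condition-on-h} into a self-consistent asymptotic identity for $\alpha_n$ and then close the argument by strong induction. Substituting the explicit formula \eqref{residual-h-inner-formula} into \eqref{condition-on-h} and applying the asymptotics \eqref{q-asymptotics}, \eqref{gamma-asymptotics}, and \eqref{inner-product-h-asymptotics}, one obtains a relation of the schematic form
$$
\alpha_n \cdot \Phi_n \;=\; \frac{\beta}{1-2\beta} + O(n^{-1}),
$$
where $\Phi_n$ denotes $1/n$ times the bracketed expression in \eqref{residual-h-inner-formula}. The task is then to show $\Phi_n \to \beta/(1-2\beta)$, which forces $\alpha_n \to 1$. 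Proposition \ref{large-K-lemma} already supplies uniform boundedness of $\alpha_j$ and the limit $\xi_j \to 1$, so the inductive set-up is legitimate.

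The induction hypothesis is $\alpha_j = 1 + o(1)$ as $j\to\infty$. Applying Lemma \ref{phi-integral-estimates} to the inner sum in $\Phi_n$ and using this hypothesis to replace each $\alpha_j$ by $1$ with vanishing error, we get
$$
\sum_{j=\max(k+1,K)}^{n-1}\frac{\alpha_j}{j}\phi(k/j) \;=\; \int_{k/n}^{1}\phi(z)\frac{dz}{z} + o_k(1).
$$
The boundary term $\frac{K^{-1/2+\beta}}{q_{K-1}\sqrt{K-1}}\sum_{k=1}^{K-1}\phi(k/n)$ vanishes identically once $n > (K-1)/\delta$, since $\phi \equiv 0$ on $[0,\delta]$ forces $\phi(k/n)=0$ for $k\le K-1$ at such $n$. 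Combining this with the remaining $\xi_k$-sum and the limit $\xi_k \to 1$, the quantity $\Phi_n$ reduces to the Riemann sum
$$
\Phi_n \;=\; \frac{1}{n}\sum_{k=1}^{n-1}\phi(k/n)\left(1 + \int_{k/n}^{1}\phi(z)\frac{dz}{z}\right) + o(1),
$$
which converges to $\int_0^1 \phi(y)\bigl(1+\int_y^1 \phi(z)\frac{dz}{z}\bigr)\,dy = \beta/(1-2\beta)$ by hypothesis \eqref{phi-integral-condition-1}. This closes the induction and proves $\alpha_n \to 1$.

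For the refined ratio estimate $\alpha_{n+1}/\alpha_n = 1 + o(n^{-1})$, the strategy is to bootstrap the qualitative convergence just obtained: with $\alpha_n \to 1$ in hand, the Riemann-sum discretization error and the $O(1/k)$ term in Lemma \ref{phi-integral-estimates} can be quantified using the smoothness of $\phi$, and subtracting the defining identities at $n$ and $n+1$ then gives $\alpha_{n+1}-\alpha_n = o(1/n)$. The coefficient asymptotic \eqref{asymptotics-coefficients-of-r} follows directly from \eqref{large-components-of-r}: factoring out $-q_n$ leaves $\xi_k + \sum_{j=k+1}^{n}\frac{\alpha_j}{j}\phi(k/j)$, and a direct application of Lemma \ref{phi-integral-estimates} together with $\alpha_j \to 1$ and $\xi_k \to 1$ turns this into $1 + \int_{k/n}^{1}\phi(z)\frac{dz}{z} + o_k(1)$, as required.

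The main obstacle is the $o(n^{-1})$ ratio bound, since several independent error sources — the Riemann-sum discretization error, the $O(1/k)$ remainder in Lemma \ref{phi-integral-estimates}, and the rates at which $\alpha_j\to 1$ and $\xi_j\to 1$ — all enter the identity for $\alpha_n$ and must be disentangled quantitatively. By contrast, the qualitative limit $\alpha_n \to 1$ is a one-shot Riemann-sum argument, and the coefficient asymptotic is then an immediate corollary of the limits and Lemma \ref{phi-integral-estimates}.
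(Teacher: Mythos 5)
Your argument for $\alpha_n \to 1$ is circular. You introduce the "induction hypothesis $\alpha_j = 1 + o(1)$ as $j\to\infty$," but this is an asymptotic statement about the entire tail of the sequence, not about finitely many terms $j<n$, so it cannot serve as an inductive hypothesis: you would be assuming the very limit you want to establish. The difficulty is real and cannot be waved away, because the quantity $\Phi_n$ (one-$n$th of the bracket in \eqref{residual-h-inner-formula}) contains the double sum $\sum_{k}\sum_{j}\frac{\alpha_j}{j}\phi(k/n)\phi(k/j)$, so $\Phi_n$ depends on the unknown $\alpha_j$'s with $j<n$. You therefore cannot "show $\Phi_n \to \beta/(1-2\beta)$, which forces $\alpha_n \to 1$" as two independent steps; the only a priori information is $0 \le \alpha_j \le C$ from Proposition \ref{large-K-lemma}. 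The paper's proof resolves this chicken-and-egg problem with a genuine two-sided bootstrap: it shows that $\liminf_n\alpha_n\geq\alpha$ implies $\limsup_n\alpha_n\leq F_{\phi,s}(\alpha)$ and symmetrically, where $F_{\phi,s}(\alpha) = \frac{\beta}{1-2\beta}\bigl(\int_0^1\phi(x)(1+\alpha\int_x^1\phi(z)\frac{dz}{z})dx\bigr)^{-1}$, and then proves that iterating $F_{\phi,s}$ from $0$ converges to the fixed point $1$ (Lemma \ref{recursive-function-lemma}). This fixed-point squeeze is the key idea missing from your outline.

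Your treatment of \eqref{alpha-asymptotics-equation} is also too thin. "Subtracting the defining identities at $n$ and $n+1$" does not by itself give $\alpha_{n+1}-\alpha_n = o(n^{-1})$; what one actually obtains after a careful Riemann-sum comparison (including the $x\phi'(x)$ term that arises from $\phi(k/n)-\phi(k/(n+1))$) is $\alpha_{n+1}/\alpha_n = 1 + \kappa n^{-1} + o(n^{-1})$ for some constant $\kappa$. The paper then needs a separate argument to conclude $\kappa = 0$: the product $\prod_{k=n}^T \alpha_{k+1}/\alpha_k = \alpha_{T+1}/\alpha_n$ is bounded in $T$ because $\alpha_k\to 1$, and a nonzero $\kappa$ would force the product to diverge. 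Your outline skips both the identification of the $n^{-1}$ coefficient and the boundedness argument that kills it. The final claim about \eqref{asymptotics-coefficients-of-r} is fine and essentially matches the paper, but it is downstream of $\alpha_n\to 1$, which you have not actually proved.
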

\begin{proof}
 The proof is quite similar to the proof of Lemma 2 in \cite{livshits2009lower}. Suppose that
 \begin{equation}\label{lim-inf-lower-bound-assumption}
  \liminf_{n\rightarrow \infty} \alpha_n \geq \alpha
 \end{equation}
 for some $\alpha \geq 0$.
 
Using the formula \eqref{residual-h-inner-formula}, we see that
\begin{equation}\label{eq-573}
\begin{split}
 \liminf_{n\rightarrow \infty}\frac{-\langle r_{n-1}, h_n\rangle}{q_{n-1}\alpha_n} \geq& \left(\alpha\lim_{n\rightarrow \infty}\frac{}{n}\left[\sum_{k=1}^{n-1}\sum_{j=\max(k+1,K)}^{n-1}\frac{1}{j}\phi\left(\frac{k}{n}\right)\phi\left(\frac{k}{j}\right)\right]\right. \\
 &+ \left.\lim_{n\rightarrow \infty} \frac{1}{n}\left[\frac{K^{-\frac{1}{2} + \beta}}{q_{K-1}\sqrt{K-1}}\sum_{k=1}^{K-1}\phi\left(\frac{k}{n}\right) + \sum_{k=K}^{n-1}\xi_k\phi\left(\frac{k}{n}\right)\right]\right).
 \end{split}
\end{equation}
We proceed to evaluate the limits in the above equation. Using the fact that $\phi$ is Riemann integrable, that $K$ is fixed and that $\xi_k\rightarrow 1$ as $k\rightarrow \infty$, we see that
\begin{equation}
 \lim_{n\rightarrow \infty} \frac{1}{n}\left[\frac{K^{-\frac{1}{2} + \beta}}{q_{K-1}\sqrt{K-1}}\sum_{k=1}^{K-1}\phi\left(\frac{k}{n}\right) + \sum_{k=K}^{n-1}\xi_k\phi\left(\frac{k}{n}\right)\right] = \int_0^1\phi(x)dx.
\end{equation}
Further, applying Lemma \ref{phi-integral-estimates} to $\phi$, we see that 
\begin{equation}
  \sum_{j=\max(k+1,K)}^{n-1}\frac{1}{j}\phi\left(\frac{k}{j}\right) = \int_{\frac{k}{n}}^{\min\left(1,\frac{k}{K}\right)} \phi\left(z\right)\frac{dz}{z} + O(k^{-1}).
\end{equation}
From this, by comparing a Riemann sum with an integral, we see that
\begin{equation}
\begin{split}
 \lim_{n\rightarrow \infty}\frac{1}{n}\left[\sum_{k=1}^{n-1}\sum_{j=\max(k+1,K)}^{n-1}\frac{1}{j}\phi\left(\frac{k}{n}\right)\phi\left(\frac{k}{j}\right)\right] &= \lim_{n\rightarrow \infty}\frac{1}{n}\left[\sum_{k=1}^{n-1}\phi\left(\frac{k}{n}\right)\left[\int_{\frac{k}{n}}^{\min\left(1,\frac{k}{K}\right)} \phi\left(z\right)\frac{dz}{z}\right] + O\left(\sum_{k=1}^{n-1}k^{-1}\right)\right] \\
 & = \lim_{n\rightarrow \infty}\left[\frac{1}{n}\sum_{k=1}^{n-1}\phi\left(\frac{k}{n}\right)\left[\int_{\frac{k}{n}}^{\min\left(1,\frac{k}{K}\right)} \phi\left(z\right)\frac{dz}{z}\right] + O\left(\frac{\log{n}}{n}\right)\right] \\
 & = \lim_{n\rightarrow \infty}\left[\frac{1}{n}\sum_{k=1}^{n-1}\phi\left(\frac{k}{n}\right)\left[\int_{\frac{k}{n}}^{1} \phi\left(z\right)\frac{dz}{z}\right] - \frac{1}{n}\sum_{k=1}^K \phi\left(\frac{k}{n}\right)\int_{\frac{k}{K}}^1\phi(z)\frac{dz}{z}\right]\\
 &= \int_0^1 \phi(x)\left(\int_x^1 \phi(z)\frac{dz}{z}\right)dx,
 \end{split}
\end{equation}
since for large enough $n$, $\phi(k/n) = 0$ for $k=1,...,K$ (recall that $\phi$ is assumed to vanish on $[0,\delta]$ for some $\delta$).

Plugging this into \eqref{eq-573}, we get
\begin{equation}\label{eq-595}
 \liminf_{n\rightarrow \infty}\frac{-\langle r_{n-1}, h_n\rangle}{q_{n-1}\alpha_n} \geq \left(\int_0^1 \phi(x)\left(1+\alpha\int_x^1 \phi(z)\frac{dz}{z}\right)dx\right).
\end{equation}
 Utilizing the asymptotics \eqref{q-asymptotics} and \eqref{inner-product-h-asymptotics}, we obtain that
 \begin{equation}
  \lim_{n\rightarrow \infty}\frac{-\langle r_{n-1}, h_n\rangle}{q_{n-1}} = \frac{\beta}{1 - 2\beta}.
 \end{equation}
 Together with \eqref{eq-595}, this implies that (assuming that \eqref{lim-inf-lower-bound-assumption} holds)
 \begin{equation}\label{eq-673}
  \limsup_{n\rightarrow \infty} \alpha_n \leq \left[\frac{\beta}{1 - 2\beta}\right]\left(\int_0^1 \phi(x)\left(1+\alpha\int_x^1 \phi(z)\frac{dz}{z}\right)dx\right)^{-1}.
 \end{equation}
 In an entirely analogous manner, we prove that $\limsup_{n\rightarrow \infty} \alpha_n \leq \alpha$ (for any $\alpha \geq 0$) implies that
 \begin{equation}\label{eq-677}
  \liminf_{n\rightarrow \infty} \alpha_n \geq \left[\frac{\beta}{1 - 2\beta}\right]\left(\int_0^1 \phi(x)\left(1+\alpha\int_x^1 \phi(z)\frac{dz}{z}\right)dx\right)^{-1}.
 \end{equation}
 We proceed to use the same fixed point argument from \cite{livshits2009lower} to complete the proof. Define $F_{\phi,s}$ by 
 \begin{equation}\label{F-phi-s-definition-761}
  F_{\phi,s}(\alpha) = \left[\frac{\beta}{1 - 2\beta}\right]\left(\int_0^1 \phi(x)\left(1+\alpha\int_x^1 \phi(z)\frac{dz}{z}\right)dx\right)^{-1}.
 \end{equation}
 Proposition \ref{large-K-lemma} implies that $\liminf_{n\rightarrow \infty} \alpha_n \geq 0$. Thus, setting $\pi_0 = 0$ and $\pi_n = F_{\phi,s}(\pi_{n-1})$, we see inductively that \eqref{eq-673} implies that
 \begin{equation}
     \limsup_{m\rightarrow \infty} \alpha_m \leq \pi_{2n-1},
 \end{equation}
 while \eqref{eq-677} implies that
 \begin{equation}
     \liminf_{m\rightarrow \infty} \alpha_m \geq \pi_{2n}.
 \end{equation}
 Thus \eqref{alpha-limit-equation} will be proved if we can show that $\lim_{n\rightarrow \infty} \pi_n = F_{\phi,s}^n(0) = 1$ (here the exponent represents function composition).
 \begin{lemma}\label{recursive-function-lemma}
     The function $F_{\phi,s}^n$ defined in \eqref{F-phi-s-definition-761} satisfies $\lim_{n\rightarrow \infty} F_{\phi,s}^n(0) = 1$.
 \end{lemma}
 \begin{proof}
 Rewrite $F_{\phi,s}$ as 
 \begin{equation}
  F_{\phi,s}(\alpha) = \frac{A}{B + C\alpha},
 \end{equation}
where $A = \frac{\beta}{1 - 2\beta} > 0$, $B = \int_0^1 \phi(x)dx > 0$, and $C = \int_0^1 \phi(x)\left(\int_x^1 \phi(z)\frac{dz}{z}\right)dx > 0$. 

The assumption \eqref{phi-integral-condition-1} implies that $F_{\phi,s}(1) = 1$, i.e., that $A = B+C$. We wish to show that iterating the map $F_{\phi,s}$ converges to this fixed point. This follows from the following simple calculation
 \begin{equation}
  |F^2_{\phi,s}(\alpha) - 1| = \left|\frac{C^2}{C^2 + B^2 + BC\alpha + BC}(\alpha - 1)\right| \leq \frac{C^2}{C^2 + B^2}|\alpha - 1|,
 \end{equation}
 since $\frac{C^2}{C^2 + B^2} < 1$.
 \end{proof}
Lemma \ref{recursive-function-lemma} completes the proof of \eqref{alpha-limit-equation}.

 Next, we prove the asymptotic formula \eqref{asymptotics-coefficients-of-r}. We may assume that $k > K$ and use Lemma \ref{components-of-r-formula} and the fact that $\xi_k \rightarrow 1$ (see Proposition \ref{large-K-lemma}) to get
 \begin{equation}
  \langle r_n, e_k\rangle = -q_n\left(1 + \sum_{j=k+1}^n \langle h_j, e_k\rangle + o_k(1)\right).
 \end{equation}
 Using the definition of $h_n$ \eqref{definition-of-h}, we get
 \begin{equation}
  \langle r_n, e_k\rangle = -q_n\left(1 + \sum_{j=k+1}^n \frac{\alpha_j}{j}\phi\left(\frac{k}{j}\right) + o_k(1)\right) = -q_n\left(1 + \sum_{j=k+1}^n \frac{1+o_k(1)}{j}\phi\left(\frac{k}{j}\right) + o_k(1)\right).
 \end{equation}
 Using Lemma \ref{phi-integral-estimates} and noting that $\int_0^1\frac{\phi(z)}{z}dz < \infty$ (since $\phi = 0$ on $[0,\delta]$), we get
 \begin{equation}
  \langle r_n, e_k\rangle = -q_n\left(1 + \int_\frac{k}{n}^1 \phi(z)\frac{dz}{z} + o_k(1)\right),
 \end{equation}
 as desired.

 Finally, we will prove \eqref{alpha-asymptotics-equation}. For this, we note that \eqref{condition-on-h}, combined with the asymptotics \eqref{inner-product-h-asymptotics} implies that
 \begin{equation}\label{eq-716}
  \frac{\langle r_n, h_{n+1}\rangle}{\langle r_{n-1},h_n\rangle} = 1 + \frac{\beta-1}{n} + o(n^{-1}).
 \end{equation}
 Using the formula \eqref{eq-365} to rewrite this as
 \begin{equation}\label{eq-720}
  \frac{\langle r_n, h_{n+1}\rangle}{\langle r_{n-1},h_n\rangle} = \frac{\langle \frac{q_{n}}{q_{n-1}}r_{n-1} - q_nh_n - q_n\xi_ne_n, h_{n+1}\rangle}{\langle r_{n-1},h_n\rangle}= \frac{q_{n}}{q_{n-1}}\frac{\langle r_{n-1}, h_{n+1}\rangle}{\langle r_{n-1},h_n\rangle} - q_n\frac{\langle h_n + \xi_ne_n, h_{n+1}\rangle}{\langle r_{n-1},h_n\rangle}.
 \end{equation}
 Using the asymptotics \eqref{q-asymptotics} and \eqref{inner-product-h-asymptotics}, we see that
 \begin{equation}
  \frac{q_n}{\langle r_{n-1},h_n\rangle} = c' + O(n^{-1}),
 \end{equation}
 for a constant $c'$. In addition, using the definition of $h_n$, the fact that $\xi_n\rightarrow 1$, and that $\phi$ is smooth, we see that
 \begin{equation}
  \langle h_n + \xi_ne_n, h_{n+1}\rangle = \frac{1}{n+1}\left(\frac{1}{n}\sum_{i=1}^{n-1}\phi\left(\frac{i}{n}\right)\phi\left(\frac{i}{n+1}\right) + \phi\left(\frac{n}{n+1}\right) + o_n(1)\right) = c''n^{-1} + o(n^{-1}),
 \end{equation}
 for a constant $c'' = \phi(1) + \int_0^1\phi^2(x)dx$.
 This means that the last term in \eqref{eq-720} is
 \begin{equation}
  q_n\frac{\langle h_n + \xi_ne_n, h_{n+1}\rangle}{\langle r_{n-1},h_n\rangle} = cn^{-1} + o(n^{-1}),
 \end{equation}
 for a constant $c$.

 Now, we calculate, using the definition of $h_n$,
 \begin{equation}
  \langle r_{n-1},h_n\rangle = \sum_{k=1}^{n-1}\langle r_{n-1},e_k\rangle\langle h_n,e_k\rangle = \frac{\alpha_n}{n}\sum_{k=1}^{n-1}\langle r_{n-1},e_k\rangle \phi\left(\frac{k}{n}\right).
 \end{equation}
 Similarly, we obtain
 \begin{equation}
  \langle r_{n-1},h_{n+1}\rangle = \sum_{k=1}^{n-1}\langle r_{n-1},e_k\rangle\langle h_n,e_k\rangle = \frac{\alpha_{n+1}}{n+1}\sum_{k=1}^{n-1}\langle r_{n-1},e_k\rangle \phi\left(\frac{k}{n+1}\right).
 \end{equation}
 Next, we observe that
 \begin{equation}
  \sum_{k=1}^{n-1}\langle r_{n-1},e_k\rangle \phi\left(\frac{k}{n+1}\right) = \sum_{k=1}^{n-1}\langle r_{n-1},e_k\rangle \phi\left(\frac{k}{n}\right) + \sum_{k=1}^{n-1}\langle r_{n-1},e_k\rangle \left[\phi\left(\frac{k}{n}\right) - \phi\left(\frac{k}{n+1}\right)\right].
 \end{equation}
 Using that $\phi$ is smooth, we see that
 \begin{equation}
  \sum_{k=1}^{n-1}\langle r_{n-1},e_k\rangle \left[\phi\left(\frac{k}{n}\right) - \phi\left(\frac{k}{n+1}\right)\right] = \sum_{k=1}^{n-1}\langle r_{n-1},e_k\rangle \left[\frac{k}{n(n+1)}\phi'\left(\frac{k}{n}\right) + O\left(\frac{k^2}{n^4}\right)\right].
 \end{equation}
 Applying the asymptotics \eqref{asymptotics-coefficients-of-r} and comparing the Riemann sum with an integral, we get
 \begin{equation}
  \sum_{k=1}^{n-1}\langle r_{n-1},e_k\rangle \left[\phi\left(\frac{k}{n}\right) - \phi\left(\frac{k}{n+1}\right)\right] = -q_{n-1}\left(\int_0^1x\phi'(x)\int_x^1\phi(z)\frac{dz}{z}dx + o_n(1)\right).
 \end{equation}
Similarly, we obtain
\begin{equation}
 \sum_{k=1}^{n-1}\langle r_{n-1},e_k\rangle \phi\left(\frac{k}{n}\right) = -q_{n-1}n\left(\int_0^1\phi(x)\int_x^1\phi(z)\frac{dz}{z}dx + o_n(1)\right).
\end{equation}

 Using all of this, we obtain
 \begin{equation}
  \frac{\langle r_{n-1}, h_{n+1}\rangle}{\langle r_{n-1},h_n\rangle} = \frac{\alpha_{n+1}}{\alpha_n}\frac{n}{n+1}\left(1 + n^{-1} \frac{\int_0^1x\phi'(x)\int_x^1\phi(z)\frac{dz}{z}dx + o_n(1)}{\int_0^1\phi(x)\int_x^1\phi(z)\frac{dz}{z}dx + o_n(1)}\right).
 \end{equation}
 Since $\phi\neq 0$ is non-negative, the integral in the denominator above is non-zero and we obtain
 \begin{equation}
  \frac{\langle r_{n-1}, h_{n+1}\rangle}{\langle r_{n-1},h_n\rangle} = \frac{\alpha_{n+1}}{\alpha_n}\frac{n}{n+1}\left(1 + cn^{-1} + o(n^{-1})\right),
 \end{equation}
 for a constant $c$. Using that $\frac{n}{n+1} = 1 - n^{-1} + o(n^{-1})$ and combining the above estimates with \eqref{eq-720} and \eqref{eq-716}, we get
 \begin{equation}
  \frac{\alpha_{n+1}}{\alpha_n} = 1 + \kappa n^{-1} + o(n^{-1}),
 \end{equation}
 for some constant $\kappa$. Finally, since
 \begin{equation}
  \prod_{k=n}^T\frac{\alpha_{k+1}}{\alpha_k} = \frac{\alpha_{T+1}}{\alpha_n}
 \end{equation}
is uniformly bounded in $T$ (since the sequence $\alpha_k$ converges to $1$), we must in fact have $\kappa = 0$, which completes the proof.

\end{proof}

Next, we prove that for sufficiently large $N$, the conditions \eqref{inner-prod-condition} will be satisfied. We consider first the case $k > n$. We have the following result.
\begin{proposition}
 Suppose that $\phi$ satisfies the conditions of Proposition \ref{bar-alpha-lemma} and also the inequality \eqref{integral-intequality-1} from Proposition \ref{first-part-main-proposition}.
 Then there is a sufficiently large $N$ such that for $k > n > N$ we have $|\langle r_{n-1},d_k\rangle| < q_n$.
\end{proposition}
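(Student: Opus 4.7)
Write $u_k := \langle r_{n-1}, d_k\rangle$ and, by \eqref{inductive-relation-kgn}, view the sequence $(u_k)_{k\geq n}$ as satisfying the linear recursion
$$u_k = \rho_k u_{k-1} + w_k, \qquad \rho_k := (\gamma_{k-1}^{-1}-q_{k-1})\gamma_k,\ \ w_k := \left\langle r_{n-1},\,h_k - (\gamma_k/\gamma_{k-1})h_{k-1}\right\rangle,$$
with base case $u_n = q_n$ (from \eqref{conditions-1}). I will prove $|u_k| < q_n$ for all $k > n$ by induction on $k$. By \eqref{inductive-factor-asymptotics}, $\rho_k = 1 - k^{-1} + o(k^{-1})$, so the inductive step reduces to showing a pointwise bound of the form $|w_k| \leq (1 - 2\eta + o_n(1))\,q_n/k$ uniformly in $k > n$, for some $\eta > 0$ depending only on $\phi$. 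Granting this, $|u_{k-1}| \leq q_n$ implies $|u_k| \leq |\rho_k|q_n + |w_k| \leq q_n - (2\eta - o_n(1))q_n/k < q_n$ for $n > N$ large enough.

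\textbf{Formula for $w_j$.} Since $r_{n-1} \in \mathrm{span}(e_1,\dots,e_{n-1})$, the definition \eqref{definition-of-h} of $h_j$ expresses $\langle r_{n-1}, h_j\rangle$ as a Riemann-type sum. Inserting the coefficient asymptotics \eqref{asymptotics-coefficients-of-r} of Proposition \ref{bar-alpha-lemma}, replacing $\alpha_j$ by $1 + o(1)$ using \eqref{alpha-limit-equation}, and invoking Lemma \ref{phi-integral-estimates} to pass to an integral, I obtain
$$\langle r_{n-1}, h_j\rangle = -q_{n-1}\bigl(I(n/j) + o_n(1)\bigr), \qquad I(a) := \int_0^a \phi(x)\left(1 + \int_{x/a}^1 \phi(z)\frac{dz}{z}\right)dx,$$
with the error uniform in $j > n$. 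A first-order Taylor expansion of $I(n/(j-1))$ about $a = n/j$ (noting $n/(j-1) - n/j \sim a/j$), combined with $\gamma_j/\gamma_{j-1} = 1 - \beta/j + o(1/j)$ from \eqref{quotient-asymptotics}, then yields
$$w_j = \frac{q_{n-1}}{j}\,\Phi(n/j) + o\!\left(\frac{q_n}{j}\right), \qquad \Phi(a) := aI'(a) - \beta I(a).$$

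\textbf{Identifying $\Phi$ with the integrand in \eqref{integral-intequality-1}.} A single integration by parts (using $g'(y) = -\phi(y)/y$ for $g(y) := 1 + \int_y^1 \phi(z)dz/z$) verifies the identity
$$\Phi(a) = \int_0^a \bigl(\phi'(x)x - (\beta-1)\phi(x)\bigr)g(x/a)\,dx,$$
which is exactly the function whose supremum is controlled in the hypothesis \eqref{integral-intequality-1}. The hypothesis therefore yields $\|\Phi\|_\infty = 1 - 2\eta$ for some $\eta > 0$, completing the pointwise bound from the first paragraph and therefore the induction.

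\textbf{Main obstacle.} The delicate step is ensuring that the $o_n(1)$ error in the asymptotic formula for $w_j$ is genuinely uniform over all $j \in (n, k]$. This requires (i) the uniform Riemann-sum estimate in Lemma \ref{phi-integral-estimates}, (ii) the refined ratio asymptotic \eqref{alpha-asymptotics-equation} so that $\alpha_j$ can be replaced by $1$ at every scale with an $o(1)$ penalty independent of $j$, and (iii) controlling the Taylor remainder of $I$ in the regime where $n/j \to 1$, where the arguments of $I$ at consecutive indices $j$ and $j-1$ both sit near the right endpoint of the domain of integration. This uniformity is what prevents cumulative errors along the recursion from overwhelming the margin $2\eta > 0$ furnished by the hypothesis \eqref{integral-intequality-1}.
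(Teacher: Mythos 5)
Your overall plan matches the paper's: set up the linear recursion for $u_k = \langle r_{n-1}, d_k\rangle$ from \eqref{inductive-relation-kgn}, use \eqref{inductive-factor-asymptotics} and the base case $u_n = q_n$, and reduce the induction to a pointwise bound $|w_k| \leq (1-\delta)q_n/k$. Your integration-by-parts identity $\Phi(a) = aI'(a) - \beta I(a) = \int_0^a(\phi'(x)x - (\beta-1)\phi(x))(1 + \int_{a^{-1}x}^1\phi(z)\frac{dz}{z})dx$ is also correct, and exactly recovers the integrand in \eqref{integral-intequality-1}.

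However, the route you take to the pointwise bound has a genuine gap, and it is not the one you flag as the "main obstacle." You first pass to the integral, writing $\langle r_{n-1}, h_j\rangle = -q_{n-1}(I(n/j) + o_n(1))$, and only then take the difference $w_j = \langle r_{n-1}, h_j\rangle - (\gamma_j/\gamma_{j-1})\langle r_{n-1}, h_{j-1}\rangle$. The trouble is that the $o_n(1)$ error in each $\langle r_{n-1}, h_j\rangle$ is a left-endpoint Riemann-sum error: a sum of roughly $n$ terms with mesh $1/j$, giving a discrepancy of order $O((n/j)/j) = O(a/j)$, with $a = n/j$ up to $1$. That is the \emph{same order} as the target quantity $\Phi(a)/j$, not $o(1/j)$, and there is no a priori reason the implied constant is below the margin $2\eta$ provided by \eqref{integral-intequality-1}. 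Taking the difference of two expressions, each carrying an $O(q_{n-1}/j)$ Riemann-sum error, does not automatically produce an $o(q_{n-1}/j)$ error; one would need to prove that the Riemann-sum discrepancies for $j$ and $j-1$ cancel to leading order, which is a nontrivial claim you do not establish. Your "Main obstacle" paragraph instead worries about uniformity in $j$ and the Taylor remainder of $I$, which are comparatively benign.

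The paper sidesteps this by reversing the order of operations: it forms $p_k = h_k - (\gamma_k/\gamma_{k-1})h_{k-1}$ \emph{first} and expands each coordinate, obtaining $\langle p_k, e_m\rangle = \frac{\beta-1}{k^2}\phi(m/k) - \frac{1}{k^2}\phi'(m/k)\frac{m}{k} + o(k^{-2})$. The cancellation is thus built in algebraically before any Riemann-sum comparison. When one then pairs this with $\langle r_{n-1}, e_m\rangle$ from \eqref{asymptotics-coefficients-of-r} and sums over $m$, every term is $O(q_{n-1}k^{-2})$, so both the Riemann-sum discrepancy and the accumulated $o(k^{-2})$ errors contribute $o(q_{n-1}/k)$ to the total, which is genuinely negligible against $q_{n-1}\Phi(a)/k$. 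To repair your argument, either adopt this coordinate-level expansion (as the paper does) or supply a quantitative proof that the two Riemann-sum errors agree to within $o(1/j)$ uniformly in $n < j$.
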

\begin{proof}
 We utilize the formula \eqref{inductive-relation-kgn}. If we can show that for some $\delta > 0$, we have
 \begin{equation}\label{desired-bound}
  \left|\left\langle r_{n-1}, h_k - \frac{\gamma_k}{\gamma_{k-1}}h_{k-1}\right\rangle\right| < (1-\delta)q_nk^{-1},
 \end{equation}
 for every $k > n > N$, then the asymptotics \eqref{inductive-factor-asymptotics} combined with the base case $\langle r_{n-1}, d_n\rangle = q_n$ (which holds by construction) will imply the desired result by induction.
 
 Define $p_k = h_k - \frac{\gamma_k}{\gamma_{k-1}}h_{k-1}$. From the definition of $h_k$, we see that
 \begin{equation}
  \langle p_k, e_m\rangle = \frac{\alpha_k}{k}\phi\left(\frac{m}{k}\right) - \frac{\gamma_k}{\gamma_{k-1}}\frac{\alpha_{k-1}}{k-1}\phi\left(\frac{m}{k-1}\right).
 \end{equation}
 Using \eqref{alpha-asymptotics-equation}, \eqref{alpha-limit-equation}, and that $\phi$ is bounded, we get
 \begin{equation}
  \langle p_k, e_m\rangle = (1+o_k(1))\left(\frac{1}{k}\phi\left(\frac{m}{k}\right) - \frac{\gamma_k}{\gamma_{k-1}}\frac{1}{k-1}\phi\left(\frac{m}{k-1}\right)\right) + o(k^{-2}).
 \end{equation}

 We rewrite the term in parenthesis as
 \begin{equation}
 \left(\frac{1}{k} - \frac{\gamma_k}{\gamma_{k-1}}\frac{1}{k-1}\right)\phi\left(\frac{m}{k}\right) + \frac{\gamma_k}{\gamma_{k-1}}\frac{1}{k-1}\left(\phi\left(\frac{m}{k}\right) - \phi\left(\frac{m}{k-1}\right)\right).
 \end{equation}

 Utilizing the asymptotics \eqref{quotient-asymptotics} and the boundedness of $\phi$, this can be rewritten as
 \begin{equation}
  \frac{(\beta - 1)}{k^2}\phi\left(\frac{m}{k}\right) + \left(1 - \frac{\beta}{k}\right)\frac{1}{k-1}\left(\phi\left(\frac{m}{k}\right) - \phi\left(\frac{m}{k-1}\right)\right) +  o(k^{-2}).
 \end{equation}
Next, we use that $\phi$ is infinitely differentiable and that $\frac{m}{k} - \frac{m}{k-1} = -\frac{m}{k(k-1)}$ to obtain
\begin{equation}
 \phi\left(\frac{m}{k}\right) - \phi\left(\frac{m}{k-1}\right) = -\phi'\left(\frac{m}{k}\right)\frac{m}{k(k-1)} + O\left(\frac{m^2}{k^4}\right).
\end{equation}
Restricting $m \leq n-1 < k$ (since otherwise $\langle r_{n-1}, e_m\rangle = 0$), we get
\begin{equation}
 \langle p_k, e_m\rangle = (1+o_k(1))\left(\frac{(\beta - 1)}{k^2}\phi\left(\frac{m}{k}\right) - \frac{1}{k^2}\phi'\left(\frac{m}{k}\right)\frac{m}{k}\right) +  o(k^{-2}).
\end{equation}
Since $\phi$ is smooth (and thus $\phi'$ is bounded), we get
\begin{equation}
 \langle p_k, e_m\rangle = \frac{(\beta - 1)}{k^2}\phi\left(\frac{m}{k}\right) - \frac{1}{k^2}\phi'\left(\frac{m}{k}\right)\frac{m}{k} +  o(k^{-2}).
\end{equation}

We finally calculate, using the asymptotic formula \eqref{asymptotics-coefficients-of-r} for the components of $r_{n-1}$ to get
\begin{equation}
\begin{split}
 \langle r_{n-1}, p_k\rangle &= \sum_{m=1}^{n-1}\langle r_{n-1}, e_m\rangle\langle p_k, e_m\rangle\\
 &= k^{-1}q_{n-1}\left(\frac{1}{k}\sum_{m=1}^{n-1} \left[\phi'\left(\frac{m}{k}\right)\frac{m}{k} - (\beta - 1)\phi\left(\frac{m}{k}\right) + o_k(1)\right]\left(1 + \int_{\frac{m}{n-1}}^1\phi(z)\frac{dz}{z}+o_m(1)\right)\right).
 \end{split}
\end{equation}
Comparing this Riemann sum with an integral, setting $a = \frac{n-1}{k}$, using that $\phi$ (and thus the integrand) is $C^\infty$, we get
\begin{equation}
 \langle r_{n-1}, p_k\rangle = k^{-1}q_{n-1}\left[\int_0^a(\phi'(x)x - (\beta-1)\phi(x))\left(1 + \int_{a^{-1}x}^1\phi(z)\frac{dz}{z}\right)dx+o_n(1)\right].
\end{equation}
Combined with the assumption \eqref{integral-intequality-1}, this completes the proof, since for sufficiently large $n > N$ the term in brackets above will be strictly less than $1$ in magnitude uniformly in $a$.
\end{proof}

Next, we consider the case where $k < n$. Here we need to choose both $N$ large enough and $\epsilon$ small enough.
\begin{proposition}
 Suppose that $\phi$ satisfies the condition of Proposition \ref{bar-alpha-lemma} and also the inequality \eqref{integral-intequality-2} from Proposition \ref{first-part-main-proposition}
  Then there is a sufficiently large $N$ such that for $n > k \geq N$ we have $|\langle r_{n-1},d_k\rangle| < q_n$. Further, by increasing $N$ if necessary, the denominator in the definition of $\tilde{g}_N$ will be non-zero and we can choose $\epsilon > 0$ small enough so that we also have $|\langle r_{n-1},\tilde{d}_N\rangle| < q_n$ for all $n > N$.
\end{proposition}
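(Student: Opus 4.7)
The plan is to use the closed-form summation identities \eqref{inner-product-rel-ngk} and \eqref{inner-product-rel-ntildeN} derived earlier, together with the asymptotics from Proposition \ref{bar-alpha-lemma}, to reduce each required bound to an explicit integral inequality that matches \eqref{integral-intequality-2}. Since $q_n/q_{n-1} = 1 + O(n^{-1})$, it suffices to show that the expression in parentheses in \eqref{inner-product-rel-ngk} has absolute value bounded strictly below $1$, uniformly in $N \leq k < n$, once $N$ is large enough.

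First I would expand $\langle h_i, d_k\rangle$ using $d_k = \gamma_k r_{k-1} + h_k + \xi_k e_k$ into three pieces
\begin{equation*}
\langle h_i, d_k\rangle = \gamma_k \langle h_i, r_{k-1}\rangle + \langle h_i, h_k\rangle + \xi_k \langle h_i, e_k\rangle,
\end{equation*}
and evaluate each piece asymptotically. The diagonal piece $\xi_k \langle h_i, e_k\rangle = \xi_k (\alpha_i/i) \phi(k/i)$, summed across $i = k+1, \ldots, n-1$ and converted via Lemma \ref{phi-integral-estimates} (using $\xi_k, \alpha_i \to 1$), contributes the term $\int_a^1 \phi(x)\,dx/x$ appearing at the end of \eqref{integral-intequality-2}, where $a = k/n$. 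The term $\gamma_k \langle h_i, r_{k-1}\rangle$, after substituting the coefficient asymptotics \eqref{asymptotics-coefficients-of-r} and the asymptotic \eqref{gamma-asymptotics} for $\gamma_k$, becomes a Riemann sum that converges to an integral of the form $\int_0^1 ((\beta - 1) + (\beta - 1)\int_x^1 \phi(z)\,dz/z)(\int_{ax}^x \phi(z)\,dz/z)\,dx$. The term $\langle h_i, h_k\rangle$, computed directly from the definition \eqref{definition-of-h}, contributes $\int_0^1 \phi(x)(\int_{ax}^x \phi(z)\,dz/z)\,dx$. Summing the three contributions and recognizing that the sum matches the left hand side of \eqref{integral-intequality-2}, hypothesis \eqref{integral-intequality-2} ensures the bound is strictly less than $1$ uniformly in $a \in [0,1]$, which yields the desired inequality for all $n > k \geq N$ once $N$ is large.

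For the $\tilde{d}_N$ case, I would use the formula \eqref{inner-product-rel-ntildeN} and the decomposition $\tilde{d}_N = \epsilon r_N/\|r_N\| + \sqrt{1-\epsilon^2}\,d_N$. The contribution from the $\sqrt{1-\epsilon^2}\,d_N$ part is controlled by the analysis above applied at $k = N$ (noting $\sqrt{1-\epsilon^2} \leq 1$). The contribution from the $\epsilon r_N/\|r_N\|$ part, namely $\epsilon \sum_{i=N+1}^{n-1} \langle h_i, r_N\rangle/\|r_N\|$, together with the offset $\epsilon\|r_N\|/q_N$, must be shown to be bounded by a fixed multiple of $\epsilon$ independent of $n$ and $N$. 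The ratio $\|r_N\|/q_N$ is, by \eqref{q-asymptotics} and the identity $\|r_N\| = (N+1)^{-1/2+\beta}$, bounded away from both $0$ and $\infty$ as $N \to \infty$ (this is the ``denominator'' being non-zero), so it stays under control. The sum $\sum_i \langle h_i, r_N\rangle/\|r_N\|$ is bounded uniformly: each term is estimated using the asymptotic coefficient formula \eqref{asymptotics-coefficients-of-r} and the definition of $h_i$, and the resulting Riemann sum converges to a finite integral. Thus choosing $\epsilon$ small enough beats the remaining margin left by the strict inequality in \eqref{integral-intequality-2}, yielding $|\langle r_{n-1}, \tilde{d}_N\rangle| < q_n$ for all $n > N$.

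The main obstacle will be the careful Riemann-sum-to-integral bookkeeping that shows the three pieces described above \emph{exactly} reproduce the integrand of \eqref{integral-intequality-2}, with all $o_k(1)$ and $o_n(1)$ error terms vanishing \emph{uniformly in the parameter} $a \in [0,1]$. Uniformity near $a = 0$ is delicate because the inner integral $\int_{ax}^x \phi(z)\,dz/z$ varies rapidly; the hypothesis that $\phi$ vanishes on $[0,\delta]$ is essential here, since it confines all effective support away from zero and makes the logarithmic singularity of $\int \phi(z)\,dz/z$ harmless. Uniformity near $a = 1$ (i.e., $k$ close to $n$) instead requires a small-index boundary estimate, since the sum in \eqref{inner-product-rel-ngk} runs over very few terms; here one combines a direct pointwise bound on $\langle h_i, d_k\rangle$ with the freedom to enlarge $N$ arbitrarily.
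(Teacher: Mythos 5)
Your proposal follows essentially the same path as the paper: use the closed-form sums \eqref{inner-product-rel-ngk} and \eqref{inner-product-rel-ntildeN}, expand $\langle h_i,d_k\rangle = \gamma_k\langle h_i,r_{k-1}\rangle + \langle h_i,h_k\rangle + \xi_k\langle h_i,e_k\rangle$, convert each Riemann sum to the integrals $(\beta-1)\int_0^1\bigl(1+\int_x^1\phi\,\frac{dz}{z}\bigr)\bigl(\int_{ax}^x\phi\,\frac{dz}{z}\bigr)dx$, $\int_0^1\phi(x)\bigl(\int_{ax}^x\phi\,\frac{dz}{z}\bigr)dx$, and $\int_a^1\phi(x)\frac{dx}{x}$ respectively via Lemma \ref{phi-integral-estimates} and \eqref{asymptotics-coefficients-of-r}, reassemble to match the integrand of \eqref{integral-intequality-2}, and finally treat the $\tilde d_N$ inner product by splitting off the $\epsilon$ term and choosing $\epsilon$ small after fixing $N$. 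This is exactly what the paper does.

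One correction worth flagging: you claim $\|r_N\|/q_N$ is ``bounded away from both $0$ and $\infty$ as $N\to\infty$.'' That is false. Since $\|r_N\| = (N+1)^{-1/2+\beta}$ and $q_N \sim \sqrt{1-2\beta}\,N^{\beta-1}$ by \eqref{q-asymptotics}, the ratio grows like $N^{1/2}$. The argument survives anyway because $N$ is fixed first and $\epsilon$ is chosen afterward depending on $N$; the constant multiplying $\epsilon$ is only required to be finite for the fixed $N$ (as the paper says, $K = K(N)$). Your stronger claim of $N$-uniformity is both wrong and unnecessary. Relatedly, the ``denominator being non-zero'' is automatic (it is $\|r_N\|>0$ by \eqref{conditions-1}) and does not require increasing $N$.

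Your discussion of the ``main obstacle'' slightly overcomplicates things. The support condition $\phi\equiv 0$ on $[0,\delta]$ already tames the $\log$ singularity as you note, and the $o_k(1)$ error terms (which by the paper's convention are uniform in the other indices) take care of uniformity in $a = k/n$ across the whole range; the case $a\approx 1$ does not need a separate boundary estimate because the sum $\sum_{i=k+1}^{n-1}\langle h_i,d_k\rangle$ then has few terms, each of size $O(k^{-1})$, so it is automatically small. The paper does not isolate this as a difficulty.
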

\begin{proof}
 We will need to use the formulas \eqref{inner-product-rel-ngk} and \eqref{inner-product-rel-ntildeN}. For this, we will need to estimate
 \begin{equation}
  \sum_{i=k+1}^{n-1} \langle h_i,d_k\rangle.
 \end{equation}
 Utilizing the the inductive definition of $d_k$ \eqref{inductive-definition-x}, we get
 \begin{equation}\label{eq-852}
  \sum_{i=k+1}^{n-1} \langle h_i,d_k\rangle = \gamma_k\sum_{i=k+1}^{n-1} \langle h_i,r_{k-1}\rangle + \sum_{i=k+1}^{n-1} \langle h_i,h_k\rangle + \xi_k\sum_{i=k+1}^{n-1} \langle h_i,e_k\rangle.
 \end{equation}
 We consider the last term above first. From the definition of $h_i$ \eqref{definition-of-h} we get
 \begin{equation}
  \xi_k\sum_{i=k+1}^{n-1} \frac{\alpha_i}{i}\phi\left(\frac{k}{i}\right).
 \end{equation}
 Since $\lim_{k\rightarrow \infty} \xi_k = 1$ by Proposition \ref{large-K-lemma} and $\lim_{k\rightarrow \infty} \alpha_k = 1$ by Proposition \ref{bar-alpha-lemma}, we see that
 \begin{equation}
  \xi_k\sum_{i=k+1}^{n-1} \frac{\alpha_i}{i}\phi\left(\frac{k}{i}\right) = (1 + o_k(1))\sum_{i=k+1}^{n-1} \frac{1}{i}\phi\left(\frac{k}{i}\right).
 \end{equation}
 Applying Lemma \ref{phi-integral-estimates}, we see that
 \begin{equation}
  \xi_k\sum_{i=k+1}^{n-1} \frac{\alpha_i}{i}\phi\left(\frac{k}{i}\right) = \int_{\frac{k}{n}}^1 \phi(z)\frac{dz}{z} + o_k(1).
 \end{equation}
 Setting $a = \frac{k}{n}$ we write this as
 \begin{equation}\label{estimate-third-term}
  \xi_k\sum_{i=k+1}^{n-1} \frac{\alpha_i}{i}\phi\left(\frac{k}{i}\right) = \int_{a}^1 \phi(z)\frac{dz}{z} + o_k(1).
 \end{equation}
 Next, we consider the term
 \begin{equation}
  \sum_{i=k+1}^{n-1} \langle h_i,h_k\rangle = \sum_{i=k+1}^{n-1} \sum_{j=1}^{k-1} \langle h_i, e_j\rangle\langle h_k,e_j\rangle = \sum_{i=k+1}^{n-1} \sum_{j=1}^{k-1} \frac{\alpha_i}{i}\phi\left(\frac{j}{i}\right)\frac{\alpha_k}{k}\phi\left(\frac{j}{k}\right).
 \end{equation}
 Using again \eqref{alpha-limit-equation}, we get
 \begin{equation}
  \sum_{i=k+1}^{n-1} \langle h_i,h_k\rangle = (1 + o_k(1))\sum_{i=k+1}^{n-1} \sum_{j=1}^{k-1} \frac{1}{i}\phi\left(\frac{j}{i}\right)\frac{1}{k}\phi\left(\frac{j}{k}\right),
 \end{equation}
 and Lemma \ref{phi-integral-estimates} implies (recalling the choice $a = \frac{k}{n}$)
 \begin{equation}
 \begin{split}
  \sum_{i=k+1}^{n-1} \langle h_i,h_k\rangle &= (1 + o_k(1))\frac{1}{k}\sum_{j=1}^{k-1}\left[ \phi\left(\frac{j}{k}\right) \int_{\frac{j}{n}}^\frac{j}{k}\phi(z)\frac{dz}{z} + O(j^{-1})\right]\\
  & = (1 + o_k(1))\frac{1}{k}\sum_{j=1}^{k-1} \phi\left(\frac{j}{k}\right) \int_{a\frac{j}{k}}^\frac{j}{k}\phi(z)\frac{dz}{z} + O\left(\frac{\log{k}}{k}\right).
  \end{split}
 \end{equation}
  Comparing a Riemann sum with an integral finally yields
  \begin{equation}\label{estimate-second-term}
   \sum_{i=k+1}^{n-1} \langle h_i,h_k\rangle = \int_0^1 \phi(x)\int_{ax}^x\phi(z)\frac{dz}{z} + o_k(1).
  \end{equation}
 Finally, we consider the first term in \eqref{eq-852}. Using \eqref{asymptotics-coefficients-of-r}, we get
 \begin{equation}
  \gamma_k\sum_{i=k+1}^{n-1} \sum_{j=1}^{k-1} \langle h_i, e_j\rangle\langle r_{k-1},e_j\rangle = -\gamma_kq_{k-1}\sum_{i=k+1}^{n-1} \sum_{j=1}^{k-1} \frac{\alpha_i}{i}\phi\left(\frac{j}{i}\right)\left(1 + \int_{\frac{j}{k}}^1\phi(z)\frac{dz}{z} + o_k(1)\right).
 \end{equation}
 Utilizing the asymptotics \eqref{q-asymptotics}, \eqref{gamma-asymptotics}, and \eqref{alpha-limit-equation}, we get
 \begin{equation}
  \gamma_k\sum_{i=k+1}^{n-1} \sum_{j=1}^{k-1} \langle h_i, e_j\rangle\langle r_{k-1},e_j\rangle = (\beta - 1 + o_k(1))\frac{1}{k}\sum_{i=k+1}^{n-1} \sum_{j=1}^{k-1} \frac{1}{i}\phi\left(\frac{j}{i}\right)\left(1 + \int_{\frac{j}{k}}^1\phi(z)\frac{dz}{z} + o_k(1)\right).
 \end{equation}
 Proceeding as before, using Lemma \ref{phi-integral-estimates}, recalling the choice $a = \frac{k}{n}$, and comparing a Riemann sum with an integral, we finally obtain
 \begin{equation}\label{estimate-first-term}
  \gamma_k\sum_{i=k+1}^{n-1} \sum_{j=1}^{k-1} \langle h_i, e_j\rangle\langle r_{k-1},e_j\rangle = (\beta - 1)\int_0^1\left(\int_{ax}^x\phi(z)\frac{dz}{z}\right)\left(1 + \int_{x}^1\phi(z)\frac{dz}{z}\right)dx + o_k(1).
 \end{equation}
 Combining the estimates \eqref{estimate-first-term}, \eqref{estimate-second-term}, and \eqref{estimate-third-term}, we get
 \begin{equation}\label{final-estimate}
  \sum_{i=k+1}^{n-1} \langle h_i,d_k\rangle = \int_0^1\left((\beta - 1) + (\beta - 1)\int_{x}^1\phi(z)\frac{dz}{z} + \phi(x)\right)\left(\int_{ax}^x\phi(z)\frac{dz}{z}\right) dx + \int_a^1\phi(x)\frac{dx}{x} + o_k(1).
 \end{equation}
 Our assumption \eqref{integral-intequality-2} on $\phi$ now imply that for sufficiently large $k$, the sum $\sum_{i=k+1}^{n-1} \langle h_i,d_k\rangle$ will satisfy the bounds
 \begin{equation}\label{final-sum-bound-915}
  -1 < \sum_{i=k+1}^{n-1} \langle h_i,d_k\rangle < 1,
 \end{equation}
 uniformly in $n$ and $k$. We now use equation \eqref{inner-product-rel-ngk} to see that
 \begin{equation}
  |\langle r_{n-1},d_k\rangle| = q_{n-1}\left|\sum_{i=k+1}^{n-1} \langle h_i,d_k\rangle\right|\leq q_n(1+O(n^{-1}))\left|\sum_{i=k+1}^{n-1} \langle h_i,d_k\rangle\right|.
 \end{equation}
 The bound \eqref{final-sum-bound-915} implies that for sufficiently large $N$ the final absolute value above is strictly smaller than $1$ uniformly in $n > k \geq N$. Thus, by choosing $N$ potentially larger, we can guarantee that for all $n > k\geq N$,
 \begin{equation}\label{bound-condition-on-N-964}
(1+O(n^{-1}))\left|\sum_{i=k+1}^{n-1} \langle h_i,d_k\rangle\right| < 1,
 \end{equation}
 as well. Fix $N$ large enough to guarantee this. Then we get
 \begin{equation}
  |\langle r_{n-1},d_k\rangle| < q_n,
 \end{equation}
 as desired.
 
 Finally, we bound $\langle r_{n-1}, \tilde{d}_N\rangle$. Consider the inner product formula \eqref{inner-product-rel-ntildeN}. Utilizing the definition of $\tilde{g}_N$, we obtain
 \begin{equation}
  \sum_{i=N+1}^{n-1} \langle h_i,\tilde{g}_N\rangle = C_\epsilon \sum_{i=N+1}^{n-1} \langle h_i,d_N\rangle + \frac{\epsilon}{\|r_N\|} \sum_{i=N+1}^{n-1} \langle h_i,r_N\rangle,
 \end{equation}
where $$C_\epsilon = \sqrt{1-\epsilon^2}.$$
We note that since $r_N\in \text{span}(e_1,...,e_N)$, we get
\begin{equation}
 \sum_{i=N+1}^{n-1} \langle h_i,r_N\rangle = \sum_{i=N+1}^{n-1} \sum_{j=1}^N \langle h_i,e_j\rangle\langle r_N,e_j\rangle = \sum_{j=1}^N \langle r_N,e_j\rangle \sum_{i=N+1}^{n-1}\frac{\alpha_i}{i}\phi\left(\frac{j}{i}\right).
\end{equation}
Since $\phi$ is assumed to vanish on $[0,\delta]$ we get that 
\begin{equation}
 \sum_{i=N+1}^{n-1} \langle h_i,r_N\rangle \leq \sum_{j=1}^N \langle r_N,e_j\rangle \sum_{i=N+1}^{\delta^{-1}N}\frac{\alpha_i}{i}\phi\left(\frac{j}{i}\right) < \infty,
\end{equation}
and so this sum is bounded uniformly in $n$ (but depending upon $N$).

Plugging these bounds into \eqref{inner-product-rel-ntildeN} we see that
\begin{equation}\label{epsilon-estimate}
 |\langle r_{n-1},\tilde{g}_N\rangle| \leq q_{n}(1+O(n^{-1}))\left|C_\epsilon \sum_{i=k+1}^{n-1} \langle h_i,d_N\rangle+ K\epsilon\right|,
\end{equation}
for a constant $K = K(N)$ depending upon $N$. Utilizing the bound \eqref{bound-condition-on-N-964}, we see that for sufficiently small $\epsilon > 0$ (depending upon $K$ and thus $N$), we finally get (since $C_\epsilon < 1$)
\begin{equation}
 |\langle r_{n-1},\tilde{g}_N\rangle| < q_n,
\end{equation}
for all $n > N$. This completes the proof.
\end{proof}

\section{Proof of Proposition \ref{second-part-main-proposition}}\label{second-part-main-proposition-proof-section}
In this Section, we prove Proposition \ref{second-part-main-proposition}, which guarantees the existence of a function $\phi$ satisfying the conditions of Proposition \ref{first-part-main-proposition}.

To construct such a function $\phi$, let $\nu\geq 0$ be a $C^\infty$ bump function which is supported on $[-1,1]$ and satisfies 
$$\int_{-\infty}^\infty\nu(x)dx = 1.$$
Let $t > 0$ and write $\nu_t = t^{-1}\nu\left(t^{-1}x\right)$.

Fix a parameter $\tau\in (0,1)$ to be determined later. For a continuously differentiable function $f:[\tau,1]\rightarrow \mathbb{R}_+$, we extend $f$ to the whole real line by setting $f(x) = 0$ for $x < \tau$ and $f(x) = f(1)$ for $x > 1$. Then, we smooth $f$ using the bump function $\nu$ and normalize so that \eqref{phi-integral-condition-1} is satisfied. Specifically, we set
\begin{equation}
\bar{\phi}_t(x) = \int_{-\infty}^\infty f(x - y)\nu_t(y)dy,
\end{equation}
and $\phi(x) = C_t\bar{\phi}_t$, where the constant $C_{t}$ is chosen to satisfy \eqref{phi-integral-condition-1}. 

Since $\bar{\phi}_t \rightarrow f$ in $L^1$ as $t\rightarrow 0$ and both functions vanish in a neighborhood of $0$ for $t < \tau$, we see that
\begin{equation}
 \lim_{t\rightarrow 0} \left(\int_0^1 \bar{\phi}_t(x)\left(1+\int_x^1 \bar{\phi}_t(z)\frac{dz}{z}\right)dx\right) = \left(\int_\tau^1 f(x)\left(1+\int_x^1 f(z)\frac{dz}{z}\right)dx\right).
\end{equation}
This means that if $f$ satisfies
\begin{equation}\label{f-integral-condition-1}
 \left(\int_\tau^1 f(x)\left(1+\int_x^1 f(z)\frac{dz}{z}\right)dx\right) = \frac{\beta}{1 - 2\beta},
\end{equation}
then the normalizing constant will satisfy $C_t\rightarrow 1$ as $t\rightarrow 0$.

Using this, we calculate that if $f$ satisfies
\begin{equation}\label{f-integral-intequality-1}
 \sup_{a\in[\tau,1]}\left|f(\tau)\tau \left(1 + \int_{a^{-1}\tau}^1f(z)\frac{dz}{z}\right) + \int_\tau^a(f'(x)x - (\beta - 1)f(x)) \left(1 + \int_{a^{-1}x}^1f(z)\frac{dz}{z}\right)dx\right| < 1
\end{equation}
and
\begin{equation}\label{f-integral-intequality-2}
  \sup_{a\in[\tau,1]}\left|\int_\tau^1\left((\beta - 1) + (\beta - 1)\int_{x}^1f(z)\frac{dz}{z} + f(x)\right)\left(\int_{ax}^xf(z)\frac{dz}{z}\right) dx + \int_a^1f(x)\frac{dx}{x}\right| < 1,
 \end{equation}
in addition to \eqref{f-integral-condition-1}, then for sufficiently small $t > 0$, $\phi$ will satisfy \eqref{phi-integral-condition-1}, \eqref{integral-intequality-1}, and \eqref{integral-intequality-2}. 

Indeed, for any $t < \tau$, we clearly have $\phi\in C^\infty$ and $\phi(x) = 0$ on $[0,\delta]$ if $\delta < \tau - t$. Now, since $\phi$ vanishes in a neighborhood of $0$, we see that the integral in \eqref{integral-intequality-2} converges to the corresponding integral for $f$ uniformly in $a$. For the condition \eqref{integral-intequality-1}, we note that
\begin{equation}
    1 + \int_{x}^1\phi_t(z)\frac{dz}{z} \rightarrow 1 + \int_{x}^1f(z)\frac{dz}{z},
\end{equation}
as $t\rightarrow 0$ uniformly in $x$. Further, $\phi_t$ converges to $f$ in $L^1$, so that the only problematic term arises from the $x\phi_t'(x)$ term in \eqref{integral-intequality-1}. However, $\phi_t'(x)$ converges uniformly to $f'(x)$ for $x$ outside of $[\tau - t,\tau + t]$. For this term, we note that
\begin{equation}
    x\left(1 + \int_{a^{-1}x}^1\phi_t(z)\frac{dz}{z}\right)
\end{equation}
is uniformly continuous in $x,t$ and $a \geq \tau - t$ (for smaller values of $a$ the entire integral in \eqref{integral-intequality-1} vanishes). On the interval $[\tau - t,\tau + t]$ the function $\phi_t$ increases rapidly from $0$ to $f(\tau)$, so the contribution from the integral of this term over $[\tau - t,\tau + t]$ converges to
\begin{equation}\label{eq-895}
    f(\tau)\tau \left(1 + \int_{a^{-1}\tau}^1f(z)\frac{dz}{z}\right),
\end{equation}
uniformly in $a \geq \tau + t$ as $t\rightarrow 0$. For $a\in [\tau - t,\tau + t]$, the integral in \eqref{integral-intequality-1} reduces to
\begin{equation}
    \int_{\tau - t}^a x\phi_t'(x)\left(1 + \int_{a^{-1}x}^1 \phi_t(z)\frac{dz}{z}\right)dx - (\beta - 1)\int_{\tau - t}^a\phi_t(x)\left(1 + \int_{a^{-1}x}^1 \phi_t(z)\frac{dz}{z}\right)dx.
\end{equation}
Since $\phi_t$ is bounded and $a\in [\tau - t,\tau + t]$ we see that this converges to
\begin{equation}
    \tau\int_{\tau - t}^a \phi_t'(x)dx = \tau\phi_t(a),
\end{equation}
as $t\rightarrow 0$ uniformly in $a\in [\tau - t,\tau + t]$. Since $|\tau f(\tau)| < 1$ (setting $a = \tau$ in \eqref{f-integral-intequality-1}), for sufficiently small $t$ we will have $|\tau\phi_t(a)| \leq \tau f(\tau) + \epsilon < 1$ for all $a\in [\tau - t,\tau + t]$ (as $\phi_t$ increases to $\phi_t(\tau+t)$ on the interval $[\tau - t,\tau + t]$ and $|\phi_t(\tau+t) - f(\tau)|\rightarrow 0$). Thus $\phi$ satisfies all of the conditions of Proposition \ref{first-part-main-proposition}. The task now is to choose $\tau$ appropriately and to find such an $f$.

 The key to this is to introduce the function
\begin{equation}\label{definition-of-int-F}
 F(a) = \int_\tau^af(x)\left(1+\int_{a^{-1}x}^1 f(z)\frac{dz}{z}\right)dx,
\end{equation}
and to rewrite the conditions on $f$ in terms of $F$. The condition \eqref{f-integral-condition-1} then becomes
\begin{equation}\label{F-endpoint-condition}
 F(1) = \frac{\beta}{(1 - 2\beta)}.
\end{equation}
Integrating by parts to remove the $f'$ term in \eqref{f-integral-intequality-1} and observing that
\begin{equation}
 F'(a) = f(a) + a^{-1}\int_{\tau}^a f(x)f(a^{-1}x)dx,
\end{equation}
we obtain
\begin{equation}
 \sup_{a\in[\tau,1]}|aF'(a) - \beta F(a)| < 1.
\end{equation}
To determine the function $F$, we choose to set (note that this is also what is needed to make Sil'nichenko's analysis tight) 
\begin{equation}
 aF'(a) - \beta F(a) = c,
\end{equation}
for a positive constant $c < 1$. Combined with the definition \eqref{definition-of-int-F}, which implies that $F(\tau) = 0$, and the endpoint condition \eqref{F-endpoint-condition}, this forces
\begin{equation}\label{equation-for-c}
 c = \frac{\beta^2}{(1-2\beta)(\tau^{-\beta} - 1)},
\end{equation}
and
\begin{equation}\label{form-of-F}
 F(a) = \frac{c}{\beta}\left(\tau^{-\beta}a^\beta - 1\right).
\end{equation}
Since $c < 1$ must hold, we obtain the following condition on $\beta$ and $\tau$
\begin{equation}\label{parameter-conditions-1}
 \frac{\beta^2}{(1-2\beta)(\tau^{-\beta} - 1)} < 1.
\end{equation}
Differentiating the integrand in \eqref{f-integral-intequality-2} with respect to $a$, rewriting in terms of $F$, and noting that for $a = 1$ the integrals vanish, we see that the condition \eqref{f-integral-intequality-2} becomes
\begin{equation}
 \sup_{b\in [\tau,1]} \left|\int_b^1 a^{-2}(c - (1-2\beta)F(a))da\right| < 1.
\end{equation}
Plugging \eqref{form-of-F} into these conditions and noting that the integral above is minimized in $b$ when the integrand is $0$, which occurs for $$b = \tau\left(\frac{1-\beta}{1-2\beta}\right)^{\frac{1}{\beta}},$$ and is maximized when $b = \tau$, we obtain the following two conditions on $\beta$, and $\tau$
\begin{equation}\label{parameter-conditions-2}
 \frac{\beta}{(1-2\beta)(\tau^{-\beta} - 1)}\left(\beta - 1 + \frac{\tau^{-\beta}(1-2\beta)}{1-\beta} -\beta\tau^{-1}\left(\frac{1-2\beta}{1-\beta}\right)^{\frac{1}{\beta}}\right) > -1,
\end{equation}
\begin{equation}\label{parameter-conditions-3}
 \frac{\beta}{(1-2\beta)(\tau^{-\beta} - 1)}\left(\beta - 1 + \frac{\tau^{-\beta}(1-2\beta)}{1 - \beta}+\frac{\tau^{-1}\beta^2}{(1-\beta)}\right) < 1.
\end{equation}
A routine calculation shows that condition \eqref{parameter-conditions-1} means that $\tau$ must be chosen so that
\begin{equation}\label{eq-1101}
 \frac{(1-\beta)^2}{1-2\beta} < \tau^{-\beta}.
\end{equation}
Let $\tau^*$ be the value of $\tau$ which gives equality in the above bound. By choosing $\tau < \tau^*$, we see that it suffices for the inequalities \eqref{parameter-conditions-2} and \eqref{parameter-conditions-3} to hold (strictly, of course) for $\tau = \tau^*$. Plugging in this value and performing a routine, yet tedious, calculation, we see that \eqref{parameter-conditions-2} becomes
\begin{equation}\label{eq-1105}
 \beta(1-\beta)^{\frac{1}{\beta}} < 1,
\end{equation}
and \eqref{parameter-conditions-3} becomes
\begin{equation}\label{eq-1109}
 \left(\frac{\beta}{1-\beta}\right)^\beta\left(\frac{(1-\beta)^2}{1-2\beta}\right) < 1.
\end{equation}
We see that \eqref{eq-1105} holds for any $\beta\in (0,\frac{1}{2})$ since the left hand side is $ < 1$ and the right hand side is $\geq 1$. Thus the only condition on $\beta$ which must be satisfied is \eqref{eq-1109}, which is exactly the relation \eqref{beta-condition}.

The only step left in the proof of Theorem \ref{main-theorem} is to show that the equation \eqref{definition-of-int-F} can be solved for $f$ when $F$ is given by \eqref{form-of-F}. Differentiating \eqref{definition-of-int-F} with respect to $a$, we obtain the following integral differential equation on $[\tau,1]$:
\begin{equation}\label{differential-integral-equation}
 f(a) + a^{-1}\int_{\tau}^a f(x)f(a^{-1}x)dx = G(a) := F'(a) = c\tau^{-\beta}a^{\beta - 1}.
\end{equation}
We will use the following Lemma, based on the Schauder fixed point theorem, to solve this equation.
\begin{lemma}\label{schauder-lemma}
 Suppose that $G(x)$ is continuously differentiable and satisfies the bound
 \begin{equation}\label{definition-of-rg}
  R_G := \sup_{a\in [\tau,1]}a^{-2}\int_{\tau}^a G(a^{-1}x)dx < 1.
 \end{equation}
 In addition, assume that there exist functions $0\leq g_1(x)\leq g_2(x)\leq G(x)$ on $[\tau,1]$ such that
 \begin{equation}
  G(a) - a^{-1}\int_{\tau}^a g_1(x)g_1(a^{-1}x)dx \leq g_2(a),
 \end{equation}
 and
 \begin{equation}
  G(a) - a^{-1}\int_{\tau}^a g_2(x)g_2(a^{-1}x)dx \geq g_1(a).
 \end{equation}
 Then the equation \eqref{differential-integral-equation} has a continuously differentiable solution $f \geq 0$.
\end{lemma}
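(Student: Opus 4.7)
The plan is to apply Schauder's fixed point theorem to the nonlinear operator
\[
T(f)(a) := G(a) - a^{-1}\int_\tau^a f(x) f(x/a)\, dx
\]
on a convex compact subset of continuous functions; any fixed point $f = T(f)$ is precisely a solution of \eqref{differential-integral-equation}. The substitution $u = x/a$ rewrites this in the equivalent form $T(f)(a) = G(a) - \int_{\tau/a}^1 f(au) f(u)\, du$, which is convenient for differentiation.

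Let $K := \{f \in C([\tau,1]) : g_1 \leq f \leq g_2\}$. Since $f \geq 0$ on $K$, the map $f \mapsto f(x) f(x/a)$ is pointwise non-decreasing, hence $I(f)(a) := a^{-1}\int_\tau^a f(x) f(x/a)\, dx$ is monotone and $T = G - I$ \emph{reverses} the pointwise order. For $f \in K$ this gives $T(g_2) \leq T(f) \leq T(g_1)$, and combined with the two standing hypotheses $T(g_1) \leq g_2$ and $T(g_2) \geq g_1$, we conclude $g_1 \leq T(f) \leq g_2$, i.e., $T(K) \subseteq K$. Sup-norm continuity of $T$ on $K$ follows from the bilinear telescoping estimate $|T(f)(a) - T(h)(a)| \leq 2\|G\|_\infty \|f - h\|_\infty$, using $f, h \leq G$.

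For compactness, restrict to the convex closed Lipschitz subset $K_L := \{f \in K : f \text{ is } L\text{-Lipschitz}\}$, which is compact in $C([\tau,1])$ by Arzel{\`a}--Ascoli. For $f \in K_L$, Leibniz differentiation of $I(f)(a) = \int_{\tau/a}^1 f(au) f(u)\, du$ yields
\[
T(f)'(a) = G'(a) - \frac{\tau}{a^2} f(\tau) f(\tau/a) - \int_{\tau/a}^1 u f'(au) f(u)\, du,
\]
and the bounds $|f| \leq G$, $|f'| \leq L$ give
\[
\|T(f)'\|_\infty \leq \|G'\|_\infty + \|G\|_\infty^2/\tau + L \int_\tau^1 u\, G(u)\, du.
\]
The essential point is that the coefficient of $L$ above is at most $\int_\tau^1 G(u)\, du \leq R_G$, the last inequality obtained by evaluating the supremum defining $R_G$ at $a = 1$; by hypothesis this is strictly less than $1$. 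Hence for $L$ sufficiently large one has $\|T(f)'\|_\infty \leq L$, i.e., $T(K_L) \subseteq K_L$.

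Schauder's theorem applied to the continuous self-map $T: K_L \to K_L$ on the convex compact set $K_L$ produces a fixed point $f \in K_L$, which is non-negative and Lipschitz. The $C^1$-regularity then follows by a bootstrap: from $f = G - I(f)$, the first term is $C^1$ and the formula for $I(f)'$ together with dominated convergence and the continuity of $f$ shows $I(f)' \in C([\tau,1])$; therefore $f' = G' - I(f)' \in C([\tau,1])$. The main obstacle I foresee is ensuring that $K_L$ is non-empty, which is handled by first replacing $g_1, g_2$ with $C^1$ mollifications that still satisfy the two defining inequalities (by a continuity argument), and then noting that an appropriate smooth interpolation of these approximants lies in $K_L$ for sufficiently large $L$.
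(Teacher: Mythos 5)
Your proof is correct and follows essentially the same strategy as the paper: apply Schauder's fixed point theorem to $T$ on the compact convex set of functions sandwiched between $g_1$ and $g_2$ with a uniform Lipschitz bound, using the order-reversal of $T$ for the pointwise self-mapping and the hypothesis $R_G<1$ to close the derivative estimate. The only cosmetic difference is that you obtain the derivative bound via the change of variables $u = x/a$ (yielding the contraction factor $\int_\tau^1 u\,G(u)\,du \le R_G$), whereas the paper integrates by parts to bound the $f'$ term directly by $M R_G$; both routes work, and you additionally flag the non-emptiness of $K_L$, which the paper leaves implicit.
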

\begin{proof}
Consider the map $T_G:C([\tau,1])\rightarrow C([\tau,1])$ defined by
\begin{equation}
 T_G(f)(a) = G(a) - a^{-1}\int_{\tau}^a f(x)f(a^{-1}x)dx.
\end{equation}
For a fixed $M$ to be determined later, define the set
\begin{equation}
 S_M := \{f\in C^1([\tau,1]),~g_1(x) \leq f(x) \leq g_2(x),~|f'(x)| \leq M\}\subset C([\tau,1]).
\end{equation}
By the Arzela-Ascoli theorem, $S_M$ is a compact subset of $C([\tau,1])$. We will show that for sufficiently large $M$, $T_G$ maps $S_M$ into itself. Indeed, if $g_1(x) \leq f(x) \leq g_2(x)$, then we clearly have (since $g_1,g_2$ and thus $f$ are non-negative)
\begin{equation}
 T_G(f)(a) = G(a) - a^{-1}\int_{\tau}^a f(x)f(a^{-1}x)dx \leq G(a) - a^{-1}\int_{\tau}^a g_1(x)g_1(a^{-1}x)dx \leq g_2(a)
\end{equation}
and
\begin{equation}
 T_G(f)(a) = G(a) - a^{-1}\int_{\tau}^a f(x)f(a^{-1}x)dx \geq G(a) - a^{-1}\int_{\tau}^a g_2(x)g_2(a^{-1}x)dx \geq g_1(a),
\end{equation}
by assumption on $g_1$ and $g_2$. Further, taking derivates, we see that
\begin{equation}
 T_G(f)'(a) = G'(a) + a^{-2}\int_{\tau}^a f(x)f(a^{-1}x)dx - a^{-1}f(a)f(1) + a^{-3}\int_{\tau}^a f(x)f'(a^{-1}x)dx.
\end{equation}
We now integrate the last integral above by parts to obtain
\begin{equation}
\begin{split}
 a^{-3}\int_{\tau}^a f(x)f'(a^{-1}x)dx &= a^{-2}\int_{\tau}^a f(x)\frac{d}{dx}f(a^{-1}x)dx \\
 &= a^{-2}(f(a)f(1) - f(\tau)f(a^{-1}\tau)) - a^{-2}\int_{\tau}^a f'(x)f(a^{-1}x)dx.
 \end{split}
\end{equation}
Since $G$ is fixed, $f\leq g_2\leq G$ and $|f'(x)|\leq M$ by assumption, collecting the preceding equations gives a bound of
\begin{equation}
 |T_G(f)'(a)| \leq K + R_GM,
\end{equation}
where $K$ is a constant only depending upon $G$ and not upon the bound $M$. If $R_G < 1$, we can choose $M > K(1-R_G)^{-1}$ to guarantee that $|T_G(f)'(a)| \leq M$ as well. This proves that $T_G$ maps $S_M$ to itself. The proof is now completed by invoking the Schauder fixed point theorem.
\end{proof}
Finally, we verify the assumptions of Lemma \ref{schauder-lemma} for the function $G$ in \eqref{differential-integral-equation}. A straightforward calculation gives that the integral in \eqref{definition-of-rg} is maximized when $a = \min(1,\tau(1+\beta)^{1/\beta})$ and its value is given by
\begin{equation}
 \frac{ca^{-(\beta+1)}}{\beta}\left(\left(\frac{a}{\tau}\right)^\beta  -1\right).
\end{equation}
Plugging in the values $\beta = \beta^*$ and $\tau = \tau^*$ which give equality in equations \eqref{eq-1109} and \eqref{eq-1101} for $s=1$, a straightforward numerical calculation gives that $R_G < 1$. Since this value is continuous in $\beta$ and $\tau$, the same holds for close enough $\beta$ and $\tau$. 

Finally, we verify the existence of the functions $g_1$ and $g_2$ in Lemma \ref{schauder-lemma}. We consider the modified map
\begin{equation}
 \tilde{T}_G(f)(a) = \max\left(0,G(a) - a^{-1}\int_{\tau}^a f(x)f(a^{-1}x)dx\right).
\end{equation}
Starting with the function $f_0 = G$, we iterate to obtain the sequence $f_k = \tilde{T}_G(f_{k-1})$. The map $\tilde{T}_G$ is monotone, so that $f_0 \geq f_2\geq f_4\cdots$ and $f_1\leq f_3\leq f_5\cdots$. Consequently, if $f_{2n+1} > 0$ for some $n$, then $g_1 = f_{2n+1}$ and $g_2 = f_{2n}$ will satisfy the conditions of Lemma \ref{schauder-lemma}. This follows since by the monotonicity of $\tilde{T}_G$, $f \leq f_{2n}$ implies that $\tilde{T}_G(f) \geq f_{2n+1}$ and $f \geq f_{2n+1}$ implies that $\tilde{T}_G(f) \leq f_{2n+2} \leq f_{2n}$. Finally, $\tilde{T}_G(f) \geq f_{2n+1} > 0$ implies that $\tilde{T}_G$ and $T_G$ coincide, which shows that the conditions of the Lemma are satisfied.

We verify numerically for $G(x) = c\tau^{-\beta}x^{\beta - 1}$ given in \eqref{differential-integral-equation}, with $\beta = \beta^*$ and $\tau = \tau^*$ (the optimal values for $s=1$), that the iterates satisfy $f_3 > 0$. For reference, a plot of $f_0,f_1,f_2$, and $f_3$ is shown in Figure \ref{numerical-figure}. Since $f_3$ depends continuously on $G$ (and thus on $\beta$ and $\tau$), this completes the proof of Proposition \ref{second-part-main-proposition}. Finally, in Figure \ref{numerical-figure} we solve equation \eqref{differential-integral-equation} numerically to give an idea of what the optimal $f$ looks like.

\begin{figure}[H]
\begin{center}
 \includegraphics[scale=0.41]{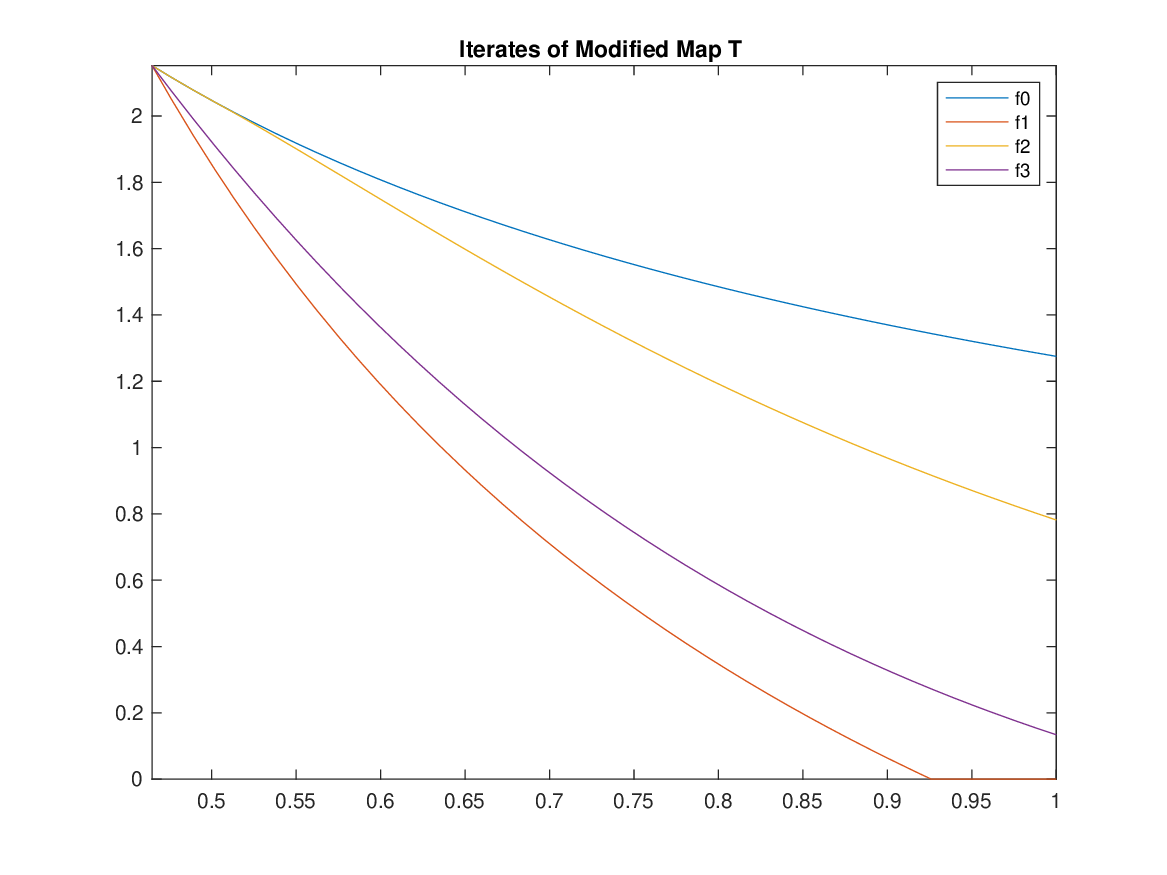}
 \includegraphics[scale=0.41]{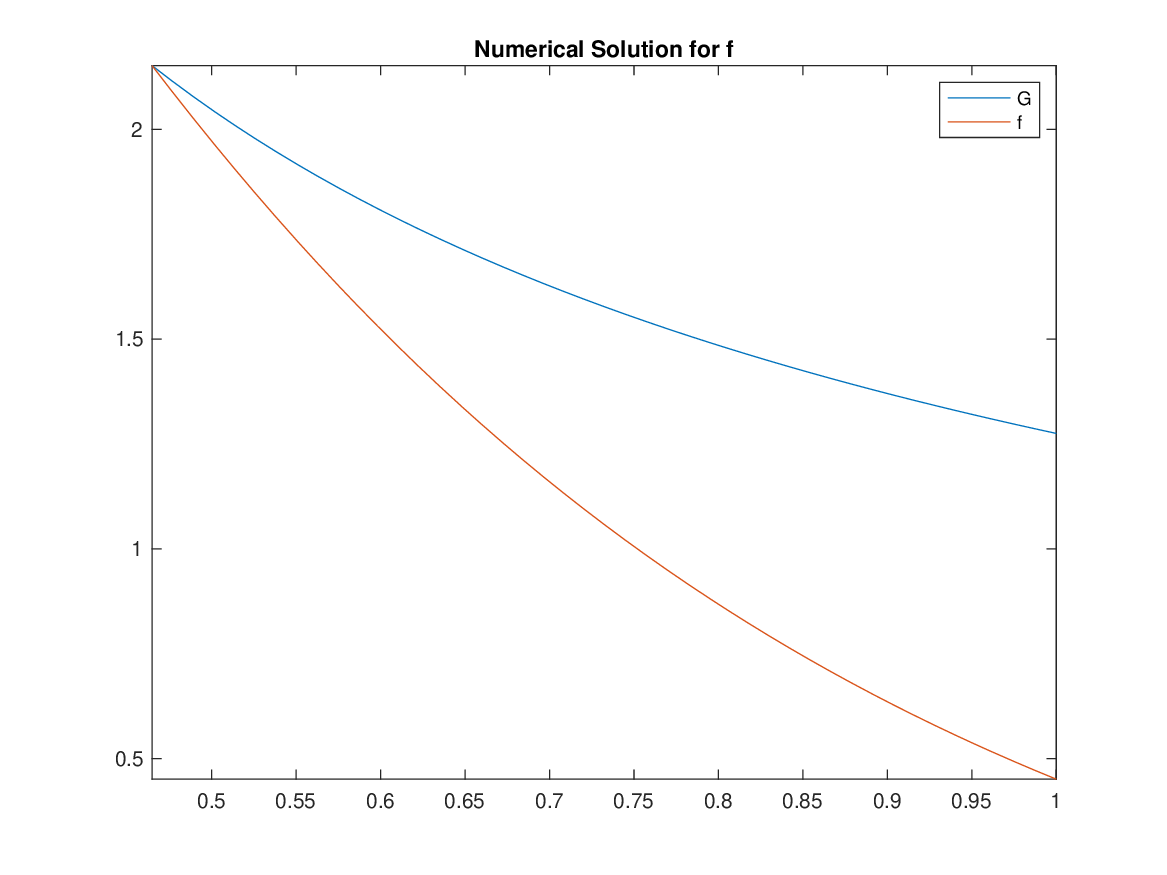}
 \end{center}
\caption{\textbf{Left}: The first $3$ iterates of $\tilde{T}_G$, which demonstrate that $f_3 > 0$. \textbf{Right}: The numerically calculated solution to \eqref{differential-integral-equation} for the $G$ corresponding to $\beta^*$ and $\tau^*$ for $s=1$.}
\label{numerical-figure}
\end{figure}

\section{Acknowledgements}
We would like to thank Andrew Barron, Ron DeVore, Jinchao Xu, Vladimir Temlyakov, and Matias Cattaneo for helpful discussions. JWS was supported in part by the National Science Foundation through DMS-2424305 and CCF-2205004 as well as the Office of Naval Research MURI N00014-20-1-2787. JMK was supported in part by the National Science Foundation through CAREER DMS-2239448, DMS-2054808, and HDR TRIPODS CCF-1934924.

\bibliographystyle{spmpsci}
\bibliography{refs}

\end{document}